\title{To Regularize or Not To Regularize? The Bias Variance Trade-off in Regularized AEs}
\author{
\begin{tabular}{cccc}
Arnab Kumar Mondal & Himanshu Asnani & Parag Singla & Prathosh AP \\
IIT Delhi & TIFR Mumbai & IIT Delhi & IIT Delhi \\
\small anz188380@cse.iitd.ac.in & \small himanshu.asnani@tifr.res.in & \small parags@cse.iitd.ac.in &  \small  prathoshap@ee.iitd.ac.in
\end{tabular}
}
\date{}
\newcommand{\printfnsymbol}[1]{%
  \textsuperscript{\@fnsymbol{#1}}%
}
\algrenewcommand\algorithmicrequire{\textbf{Precondition:}}
\algrenewcommand\algorithmicensure{\textbf{Postcondition:}}
\newtheorem{theorem}{Theorem}
\def\eqref#1{equation~\ref{#1}}
\def\1{\bm{1}}
\def\vn{{\bm{n}}}
\def\vx{{\bm{x}}}
\def\vz{{\bm{z}}}
\DeclareMathAlphabet{\mathsfit}{\encodingdefault}{\sfdefault}{m}{sl}
\SetMathAlphabet{\mathsfit}{bold}{\encodingdefault}{\sfdefault}{bx}{n}
\newsavebox{\largestimageBD}
\newsavebox{\largestimage}
\begin{document}

\maketitle

\begin{abstract}
Regularized Auto-Encoders (RAEs) form a rich class of neural generative models. They effectively model the joint-distribution between the data and the latent space using an Encoder-Decoder combination, with regularization imposed in terms of a prior over the latent space. Despite their advantages, such as stability in training, the performance of AE based models has not reached the superior standards as that of the other generative models such as Generative Adversarial Networks (GANs). Motivated by this, we examine the effect of the latent prior on the generation quality of deterministic AE models in this paper. Specifically, we consider the class of RAEs with deterministic Encoder-Decoder pairs, Wasserstein Auto-Encoders (WAE), and show that having a fixed prior distribution, \textit{a priori}, oblivious to the dimensionality of the `true' latent space, will lead to the infeasibility of the optimization problem considered. Further, we show that, in the finite data regime, despite knowing the correct latent dimensionality, there exists a bias-variance trade-off with any arbitrary prior imposition. As a remedy to both the issues mentioned above, we introduce an additional state space in the form of flexibly learnable latent priors, in the optimization objective of the WAEs. We implicitly learn the distribution of the latent prior jointly with the AE training, which not only makes the learning objective feasible but also facilitates operation on different points of the bias-variance curve. We show the efficacy of our model, called FlexAE, through several experiments on multiple datasets, and demonstrate that it is the new state-of-the-art for the AE based generative models.
\end{abstract}

\section{Introduction}

\begin{figure*}[!htbp]
    \centering
    \includegraphics[trim=30 310 30 50, clip,keepaspectratio,width=\textwidth]{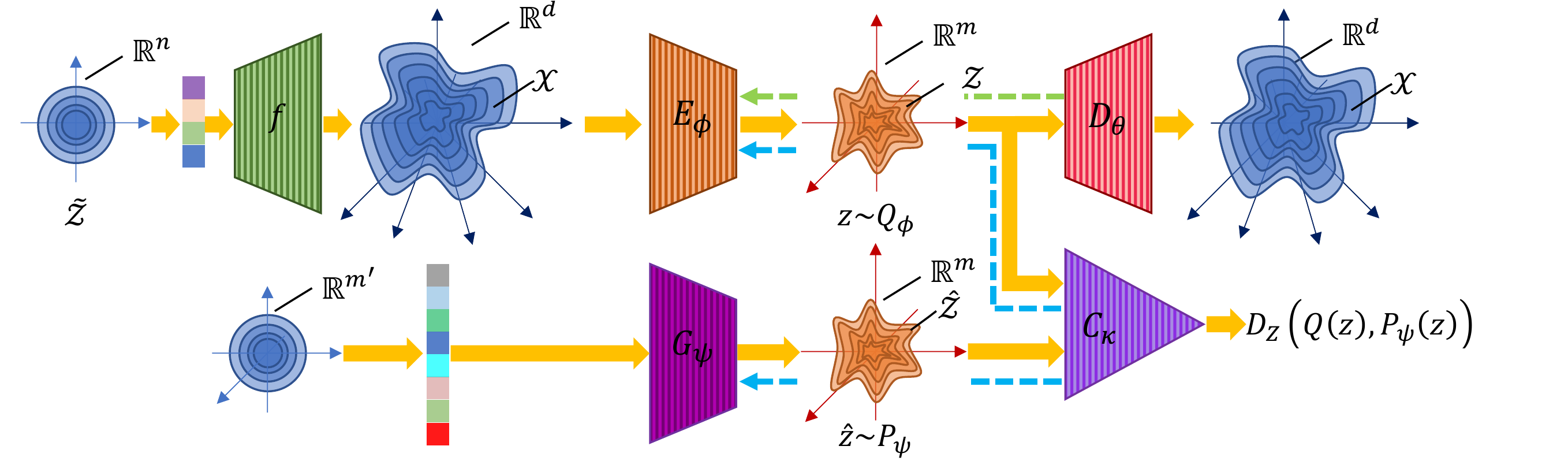}
    \caption{Our Model, FlexAE: Nature first samples an $n$-dimensional latent code from the true latent space, $\widetilde{\mathcal{Z}}$. Next, the latent code is mapped to an $n$-dimensional manifold, $\mathcal{X}$ in $\mathbb{R}^d$. The observed variables are encoded using a deterministic Encoder, $E_\phi$. The $m$-dimensional encoded representations lie in an $n$-dimensional manifold $\mathcal{Z}$. The decoder network, $D_\theta$, learns an inverse projection from the learnt latent space, $\mathcal{Z}$ to the dataspace, $\mathcal{X}$. The generator network, $G_\psi$ parameterizes the learnable prior distribution, that maps an isotropic Gaussian distribution in $\mathbb{R}^{m'}$ to any arbitrary prior $P_\psi(\vz)$ in $\mathbb{R}^m$. The critic network, $C_\kappa$ measures the distributional divergence between $Q_\phi$ and $P_\psi$. $G_\psi$ and $C_\kappa$ are jointly trained along with the Auto-Encoder. }
    \label{fig:data_gen_flexAE}
\end{figure*}
Regularized Auto-Encoder (AE) based latent variable models implicitly define a joint distribution over the input data and a lower-dimensional latent space, by approximating the true latent posterior, with a variational distribution. This variational distibution is parameterized using a neural network called the Encoder. The distribution induced by the Encoder is regularized to follow a pre-defined latent prior distribution. Subsequently, a Decoder network is trained to conditionally sample from the data distribution via optimizing a data-reconstruction metric. The parameters of the Encoder and the Decoder networks are learnt by optimizing either a  bound on the data likelihood \cite{kingma2013autoencoding} or a divergence measure between the true and generated data distributions \cite{WAE}. The framework of AE-based generative models is attractive because of its ease and stability in training, efficiency in sampling, and flexibility in architectural choices. However, despite their advantages, AE-based models have failed to reach the performance of other State-of-The-Art (SoTA) generative models \cite{dai2019diagnosing,MaskAAE}.  \par Several aspects such as the loss function used for optimization \cite{higgins2017beta,larsen2015autoencoding}, presence of conflicting terms in the optimization objective \cite{hoffman2016elbo, kim2018disentangling}, distributional choices (E.g., Gaussianity) imposed on the Encoder and Decoder \cite{zhao2019infovae,rezende2018taming}, dimensionality of the latent space used \cite{dai2019diagnosing,MaskAAE}, the mismatch between the learned and imposed prior \cite{zhao2017towards,tomczak2017vae}  have been identified as possible causes for the sub-optimal performance of the AE-based models.    
Many remedial measures, including the modification of the objective function \cite{zhao2019infovae,higgins2017beta,kim2018disentangling}, use of non-Gaussian Encoder/Decoder \cite{larsen2015autoencoding,nalisnick2016approximate}, masking of spurious latent dimensions \cite{MaskAAE}, incorporating a richer class of priors on the latent space \cite{tomczak2017vae,takahashi2019variational,klushyn2019learning}, have been proposed in the literature to address some of these issues. While these modifications have improved AE models' performance, they are still behind SoTA generative models \cite{dai2019diagnosing, MaskAAE}. In this work, we address one of these issues with the following contributions:


\begin{enumerate}
    \item We theoretically establish that in a  deterministic AE based generative model, choosing a latent prior distribution supported on the entire space, leads to infeasible optimization objective, when the `true' latent space has dimensionality that is other than that of the model's latent space.
    \item We argue that even with matched dimensionality, there exists a bias-variance trade off that arises from the choice of any assumed latent prior, whenever there is a finite data. 
    \item As a remedy, we propose a new model, which we call FlexAE, that can impose flexible learnable priors on RAEs that not only make the optimization problem feasible but also facilitate a better trade off between the bias and variance on-the-go, during AE-training.
    \item We empirically demonstrate our claims through extensive experimentation on synthetic and real-world datasets by achieving significant improvement over the SoTA AE-based generative models.
\end{enumerate}

\section{Background and Related Work}
The general theme in all RAEs is to implicitly learn the joint distribution between the observed data and a latent variable, via optimizing an objective function which consists of an auto-encoding (conditional likelihood) and latent regularization term (divergence measure). Variational Autoencoder (VAE) \cite{kingma2013autoencoding} is the pioneering member of this family, in which the variational latent posterior and conditional data likelihood are respectively parameterized by probabilistic (Gaussian) Encoder and Decoder networks, while the latent prior is assumed to be an isotropic Gaussian distribution. A related class of AE-models are the Adversarial Auto-Encoders (AAEs) \cite{44904} and Wasserstein Auto-Encoders (WAEs) \cite{WAE} where a pair of deterministic Encoder-Decoder is used with Jenson-Shannon and Wasserstein distance respectively, between the aggregated latent posterior and the latent prior.\par
Even though VAEs/WAEs provide solid frameworks for AE-based generative models, several drawbacks are associated with it, which prevents them to compete with the other SoTA generative models. It is shown that there exists a conflict between the two terms of the objective, in the case of VAEs \cite{higgins2017beta,zhao2017towards,rezende2018taming}. A few remedial measures such as introduction of a tunable parameter in the objective \cite{burgess2018understanding}, use of additional penalties such as mutual information \cite{zhao2019infovae}, total correlation \cite{kim2018disentangling}, and generalised optimization objective \cite{rezende2018taming} have been proposed.
Another often discussed issue with AE-models with stochastic Encoder-Decoders is that they adopt a simple unimodal Gaussian distribution for parameterization \cite{rosca2018distribution}. To address this, \cite{nalisnick2016deep} implements a Bayesian nonparametric version of the variational autoencoder that has a latent representation with stochastic dimensionality and could represent richer class of distributions. Invertible flow-based generative models \cite{kingma2016improved, rezende2015variational} capitalize on the idea of normalizing flow for the Encoder and Decoder networks. VAE/GAN \cite{larsen2015autoencoding}, VGH/VGH++ \cite{rosca2018distribution} incorporates an adversarial learning at the Decoder so that it can represent a rich class of distributions.
\par Further, it is observed that there is a mismatch between the aggregated variational posterior and the latent prior, leading to sub-optimality of the divergence term in the objective and in turn poor generation \cite{tomczak2017vae,dai2019diagnosing}. Several methods try to alleviate this problem, broadly in two ways (i) using a richer class of parametric priors on the latent space \cite{tomczak2017vae,klushyn2019learning,kumar2020regularized} and (ii) using a post-hoc technique to minimize the divergence or sample from the latent space without regularizing it \cite{bauer2018resampled,ghosh2020from,takahashi2019variational}.  
Among the first category of methods, VampPrior \cite{tomczak2017vae} assumes the prior to be a mixture of the conditional posteriors with a set of learnable pseudo-inputs. Authors in \cite{klushyn2019learning} adapt the constrained optimization setting in \cite{rezende2018taming} and substitute the standard normal prior with a hierarchical prior and use an importance-weighted bound as the optimization objective. In \cite{huang2017learnable,kumar2020regularized,kingma2016improved}, latent priors are learned using normalizing flow based methods. Within the second category of methods, \cite{bauer2018resampled} learns to sample from a rich class of priors by multiplying a simplistic prior distribution with a learned acceptance function. In \cite{takahashi2019variational}, kernel density trick is used for matching the prior to the aggregated posterior. RAE-GMM \cite{ghosh2020from}, imposes an L$2$-norm penalty in the latent space and learns to sample from it using a Gaussian Mixture Model (GMM) on the latent space. While these methods report improvement over the SoTA metrics, not many give a theoretical justification for using richer-class of latent priors. Further, post-hoc latent samplers such as RAE-GMM \cite{ghosh2020from} do not have control over the amount of bias imposed (modulo a simple objective scaling factor), that might lead to over/under fitting as shown later. \par
However, it has been both theoretically and empirically observed that dimensionality of the latent space used has a critical impact on the performance of these models \cite{MaskAAE,dai2019diagnosing,wae_lat_dim}.  Authors in \cite{dai2019diagnosing} study the implication of the mismatch between the dimensionality of the data and the true latent space and the role of Decoder variance, in the case of AEs with stochastic Encoders. They argue a learnable variance in the Decoder would make the objective reach negative infinity even when the aggregated posterior would not match the standard Gaussian prior not because of simplistic modelling assumption but because of mismatch between data dimensionality and the true latent dimensionality. To resolve this issue they introduce a second-stage VAE, which is used on the latent space of the first stage (which is a usual VAE), where the data and the latent dimensions match. In MaskAAE \cite{MaskAAE}, the authors noted that the generation quality degrades when there is a mismatch between the dimensionality of the true and the assumed latent space of a deterministic AE. They develop a procedure to explicitly zero-out (mask) the spurious latent dimensions via a learnable masking layer. In this backdrop, however, ours is the first study to theoretically demonstrate the in-feasibility of the objective of a deterministic Generative AE model such as WAE \cite{WAE}, under a fixed prior in relation with the mismatched latent dimensionality.


\section{Proposed Method}

\subsection{Optimality of the Latent Space of WAEs}
We start by assuming that the true data is generated in nature via a two-step process. First, the true latent variables are sampled from  an $n$-dimensional space, $\widetilde{\mathcal{Z}}$ according to some continuous distribution in $\mathbb{R}^n$. Next, a non-linear function, $f: \widetilde{\mathcal{Z}} \to \mathcal{X}$ maps the true latent space, $\widetilde{\mathcal{Z}}$ to the observed data space, $\mathcal{X} \subseteq \mathbb{R}^d$, with $d>>n$, in most practical cases. In other words, observed data $\vx$ lies on $\mathcal{X}$, an $n$-dimensional manifold embedded in $\mathbb{R}^d$. We make a benign assumption on $f$ that it can be represented using neural networks with sigmoidal (or hyperbolic tangent, ReLU, Leaky ReLU etc.) activations to arbitrary closeness. Under this model, the data could be seen as lying in an $n$-dimensional manifold within $\mathbb{R}^d$, with an underlying ground truth distribution $P_d(\vx)$. The objective of the WAE model is to estimate (or learn to sample from) the distribution $P_d(\vx)$, given some i.i.d. samples drawn from it.  The distribution learned by an RAE, denoted by $P_\theta(\vx)$ is given by $ P_\theta(\vx) = \int_\mathcal{Z}P_\theta(\vx|\vz) dP_z$, where $P_\theta(\vx|\vz)$ is the distribution parameterized by a deterministic Decoder neural network $D_\theta(\vz)$ and $P_Z(\vz)$ is the latent prior defined on an $m$-dimensional space, $\mathcal{Z}$. The distribution $P_\theta(\vx)$ is estimated by minimizing the Wasserstein distance between $P_\theta(\vx)$ and $P_d(\vx)$ which is obtained by solving the following optimization problem \cite{WAE}: 

\begin{equation}
\begin{gathered}
\mathop{\inf}_{\phi, \theta}\Bigg(\mathop{\mathbb{E}}_{P_d}\mathop{\mathbb{E}}_{Q_\phi(\vz|\vx)}\Big[c\big(\vx, D_\theta(\vz)\big)\Big]\Bigg)\\
\text{such that } Q_\phi(\vz) = P_Z(\vz)
\label{eqn:wae_constrained_obj}
\end{gathered}
\end{equation} Here $Q_\phi(\vz|\vx)$ is the variational conditional posterior, which is also parameterzied by a deterministic neural network called the Encoder, $E_\phi:\mathcal{X} \to \mathcal{Z}$. \\ $Q_\phi(z)=\int_{\mathbb{R}^d}Q_\phi(\vz|\vx)\  dP_d(\vx)$ is the aggregated posterior distribution imposed by the Encoder, $c:\mathcal{X} \times \mathcal{X} \to \mathbb{R}^+$ is any measurable cost function (such as Mean Square Error, MSE) and $\phi \in \Phi$, $\theta \in \Theta$ are the learnable parameters of Encoder and Decoder, respectively. The constrained optimization problem in Eq. \ref{eqn:wae_constrained_obj} translates to auto-encoding the input data with a constraint (regularizer) that the aggregated distribution imposed by the Encoder matches with a predefined latent prior distribution.  It can be equivalently written as an unconstrained problem by introducing a Lagrangian:
\begin{gather}
\begin{split}
D_{WAE}(P_d,P_{\theta^*})&=\mathop{\inf}_{\phi, \theta}\Bigg(\underbrace{\mathop{\mathbb{E}}_{P_d}\mathop{\mathbb{E}}_{Q_\phi(\vz|\vx)}\Big[c\big(\vx, D_\theta(\vz)\big)\Big]}_{\text{a}} +\\
&\qquad \lambda \cdot \underbrace{D_{Z}\big(Q_\phi(\vz),P_Z(\vz)\big) }_\text{b}\Bigg)
\end{split}
\label{eqn:wae_relaxed_obj}
\end{gather}
\noindent
Where $\lambda$ is the Lagrange multiplier\footnote{Theoretically, the objective should be optimized w.r.t. the Lagrange multiplier $\lambda$. However, in practical implementations \cite{WAE} it is considered to be a hyper-parameter.}, $D_{Z}(.)$ is any divergence measure such as Kullaback-Leibler, Jenson-Shannon or Wasserstein distance, between two distributions and $\theta^*$ represents the optimum decoder parameters. Note that objective in Eq. \ref{eqn:wae_constrained_obj} becomes feasible only when ${D_{Z}\big(Q_\phi(\vz),P_Z(\vz)\big) }$ becomes zero. Equipped with these, in Theorem \ref{thm:impossibility}, we show that when $m > n$ (most common practical case), the WAE objective (Eq. \ref{eqn:wae_constrained_obj}) does not have a feasible solution when the prior is fixed a priori to be any distribution which is supported outside of a set of countable union of all possible $n$-dimensional manifolds in an $m$-dimensional space, denoted by $\mathcal{Q}_m^n$. An example for such a prior is the
an isotropic Gaussian distribution in $\mathbb{R}^m$, which is the usual choice in most models. 
\par\noindent
\begin{theorem}
\label{thm:impossibility}
If $m > n$, then the regularization term in the objective function of a WAE/AAE (Eq. \ref{eqn:wae_relaxed_obj}),  $D_{Z}(Q_\phi(\vz),P_Z(\vz)) > 0,~ \forall \phi$ and for any distributional divergence $D_Z$ when the support of $P_Z(\vz)$  $\not\in \mathcal{Q}_m^n$.
\end{theorem}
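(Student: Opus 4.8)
The plan is to argue that the feasibility constraint $Q_\phi(\vz) = P_Z(\vz)$ can never be met, so that term (b) in Eq. \ref{eqn:wae_relaxed_obj} stays strictly positive. Since any distributional divergence $D_Z$ satisfies $D_Z(Q,P) \ge 0$ with equality if and only if $Q = P$, it suffices to show that the two measures $Q_\phi$ and $P_Z$ differ for \emph{every} choice of encoder parameters $\phi$. I would establish this by exhibiting a single Borel set to which the two measures assign different mass, namely $\mathcal{Q}_m^n$ itself.

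First I would identify $Q_\phi$ as a pushforward. Because the encoder is deterministic, $Q_\phi(\vz|\vx)$ is a point mass at $E_\phi(\vx)$, so the aggregated posterior is the pushforward measure $Q_\phi = (E_\phi)_\# P_d$; equivalently $Q_\phi(B) = P_d\big(E_\phi^{-1}(B)\big)$ for every Borel $B$. By the generative assumption, $P_d$ is concentrated on $\mathcal{X}$, an $n$-dimensional manifold embedded in $\mathbb{R}^d$, i.e. $P_d(\mathcal{X}) = 1$. The key geometric step is then to control the image $E_\phi(\mathcal{X})$: since the admissible activations (sigmoid, tanh, ReLU, leaky ReLU) are Lipschitz, $E_\phi$ is a locally Lipschitz map, and a Lipschitz map cannot increase Hausdorff dimension. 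Hence $E_\phi(\mathcal{X})$ has Hausdorff dimension at most $n$ and is contained in a countable union of $n$-dimensional manifolds, i.e. $E_\phi(\mathcal{X}) \subseteq \mathcal{Q}_m^n$. Consequently $\mathcal{X} \subseteq E_\phi^{-1}(\mathcal{Q}_m^n)$, which gives $Q_\phi(\mathcal{Q}_m^n) \ge P_d(\mathcal{X}) = 1$, so $Q_\phi(\mathcal{Q}_m^n) = 1$, uniformly in $\phi$.

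Next I would use the hypothesis on the prior. When $m > n$, a countable union of $n$-dimensional manifolds has Lebesgue measure zero in $\mathbb{R}^m$, so any prior with a density with respect to Lebesgue measure --- the isotropic Gaussian being the canonical example --- assigns $P_Z(\mathcal{Q}_m^n) = 0$; more generally, the hypothesis that $\mathrm{supp}(P_Z) \not\subseteq \mathcal{Q}_m^n$ yields $P_Z(\mathcal{Q}_m^n) < 1$. Either way $P_Z(\mathcal{Q}_m^n) < 1 = Q_\phi(\mathcal{Q}_m^n)$, so $Q_\phi \neq P_Z$ for all $\phi$, and therefore $D_Z(Q_\phi, P_Z) > 0$ for all $\phi$, which is the claim.

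The main obstacle I anticipate is the dimension-counting step. A merely continuous map can raise dimension (space-filling curves), so it is essential to invoke the Lipschitz/smoothness of the network --- exactly what the activation-function assumption buys --- to guarantee that the image of the $n$-manifold remains $n$-dimensional and hence null in $\mathbb{R}^m$. A secondary technical point is the precise reading of ``$\mathrm{supp}(P_Z) \not\subseteq \mathcal{Q}_m^n$'': to convert it into a strict mass deficit $P_Z(\mathcal{Q}_m^n) < 1$ one wants $P_Z$ to place positive mass on the complement of $\mathcal{Q}_m^n$, which is automatic for absolutely continuous, full-support priors such as the Gaussian and is the intended setting.
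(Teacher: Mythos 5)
Your proposal is correct and follows essentially the same route as the paper: both arguments show that the deterministic encoder pushes $P_d$ onto a set contained in $\mathcal{Q}_m^n$ (the paper via Lemma 1 of Arjovsky and Bottou applied to the composition $E_\phi \circ f$, you via a Lipschitz/Hausdorff-dimension bound), while the support hypothesis forces $P_Z$ to place positive mass outside that set, so the two measures disagree on a Borel set and every divergence is strictly positive. The one caveat is that a Hausdorff-dimension bound alone does not imply containment in a countable union of $n$-dimensional manifolds, so to use the general hypothesis $\mathrm{supp}(P_Z) \not\in \mathcal{Q}_m^n$ you still need the structural fact the paper cites (the image of an $n$-dimensional set under such a network lies in $\mathcal{Q}_m^n$); for absolutely continuous priors such as the Gaussian, your measure-zero conclusion already suffices.
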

\begin{proof}
Since $f:\widetilde{\mathcal{Z}} \to \mathcal{X}$ can be approximated arbitrarily closely using a neural network (the assumption we have made earlier) and the Encoder function $E_\phi:\mathcal{X} \to \mathcal{Z}$ is also a neural network, $E_\phi \circ f: \mathbb{R}^n \to \mathbb{R}^m$ belongs to the class of composition of affine transformations and point wise non-linearities (such as rectifiers, leaky rectifiers, or smooth strictly increasing functions like sigmoid, tanh, softplus, etc.). Consequently, $\mathcal{Z}$ is a always a countable union of $n$-dimensional manifolds in a $m$-dimensional ambient space (Lemma 1 in \cite{arjovsky2017towards}). Therefore, given that the Encoder is deterministic, by definition, $Q_\phi(\vz)$ has measure zero on $\mathbb{R}^m \backslash \mathcal{Z}$, whereas the support of $P_Z(\vz)$  $\not\in \mathcal{Q}_m^n$ which implies that it has a non-zero measure outside $\mathcal{Z}$. Thus, any distributional divergence measure between $Q_\phi(\vz)$  and $P_Z(\vz)$ will assume a non-zero value, whenever $m > n$.
\end{proof}


The above theorem asserts that it is impossible to match the aggregated latent posterior to the prior when the assumed latent dimension is more than the true latent dimension and the assumed prior has full-support, which consequently leads to bad generation quality. One possible solution for this problem is to make $m=n$ which is practically impossible because $n$ is unknown. Another way of countering this issue is to use a stochastic Encoder and fill the `extra' dimensions with external noise, however, it leads to other issues such as difficulty in Decoder training \cite{rezende2018taming}, conflict between the two terms in the objective \cite{burgess2018understanding} and non-uniqueness of solutions \cite{dai2019diagnosing}. Hence, we restrict the scope of this paper to the case of deterministic Encoder-Decoder pair and reserve the case of stochastic Encoders for our future work. 

\subsection{The Bias-Variance Trade-off}

\begin{figure*}[t!]
    \centering
    \begin{subfigure}[t]{0.24\textwidth}
        \centering
        \includegraphics[trim={0 0 25 0}, clip, keepaspectratio, width=\textwidth]{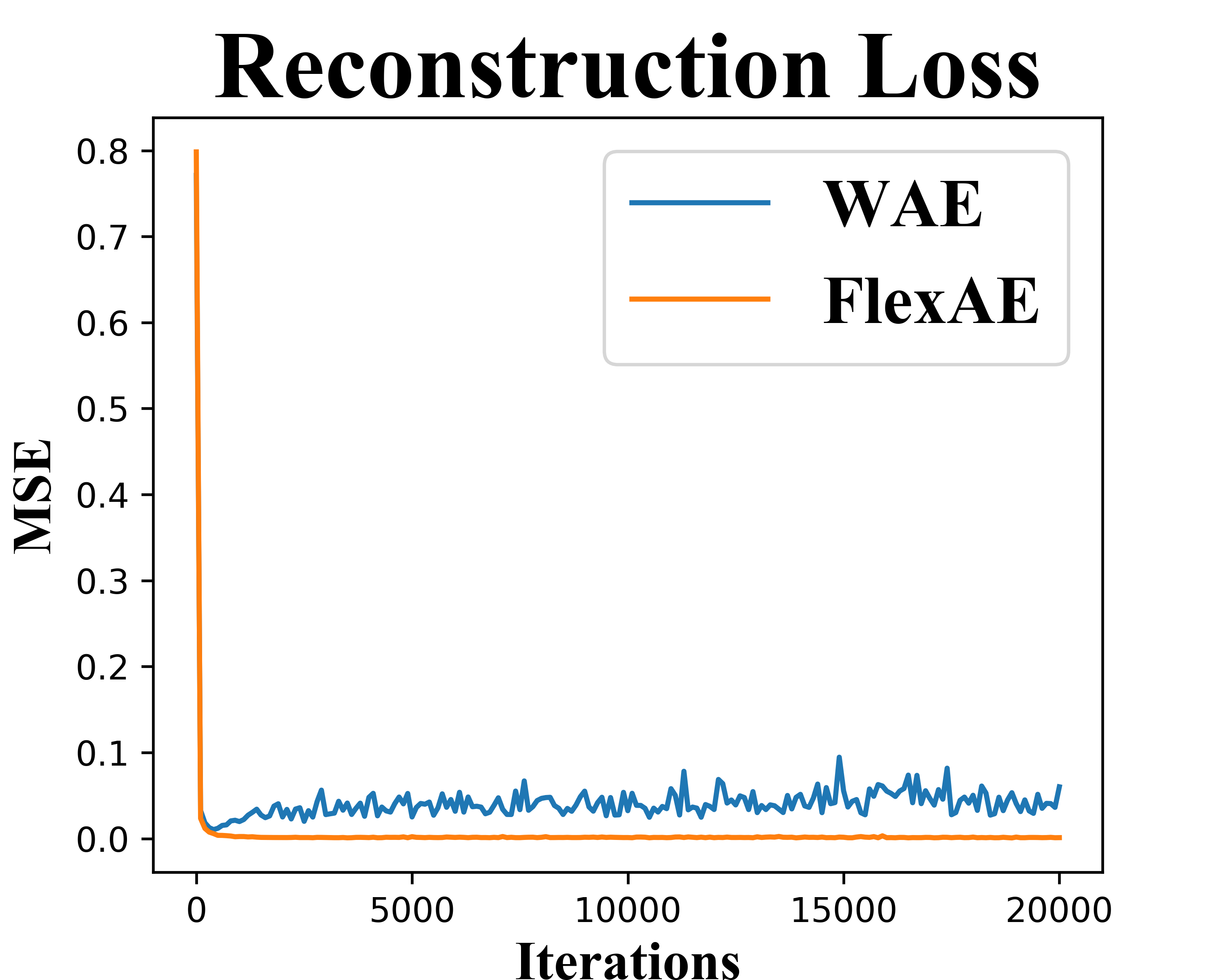}
        \caption{}
        \label{fig:recon_comp}
    \end{subfigure}%
    ~ 
    \begin{subfigure}[t]{0.24\textwidth}
        \centering
        \includegraphics[trim={0 0 25 0}, clip, keepaspectratio, width=\textwidth]{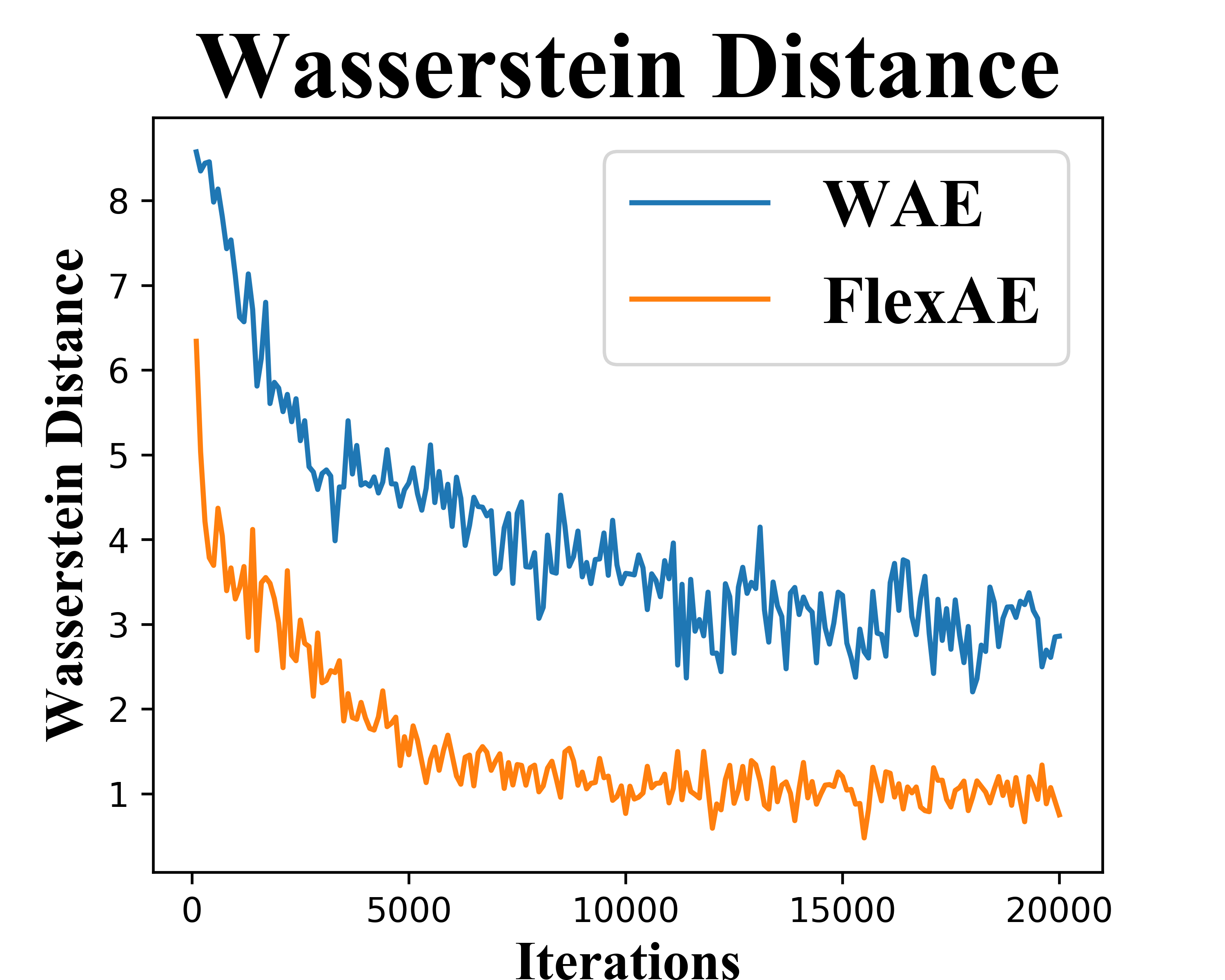}
        \caption{}
        \label{fig:wass_dist}
    \end{subfigure}%
    ~
    \begin{subfigure}[t]{0.24\textwidth}
        \centering
        \includegraphics[trim={0 0 25 0}, clip, keepaspectratio,  width=\textwidth]{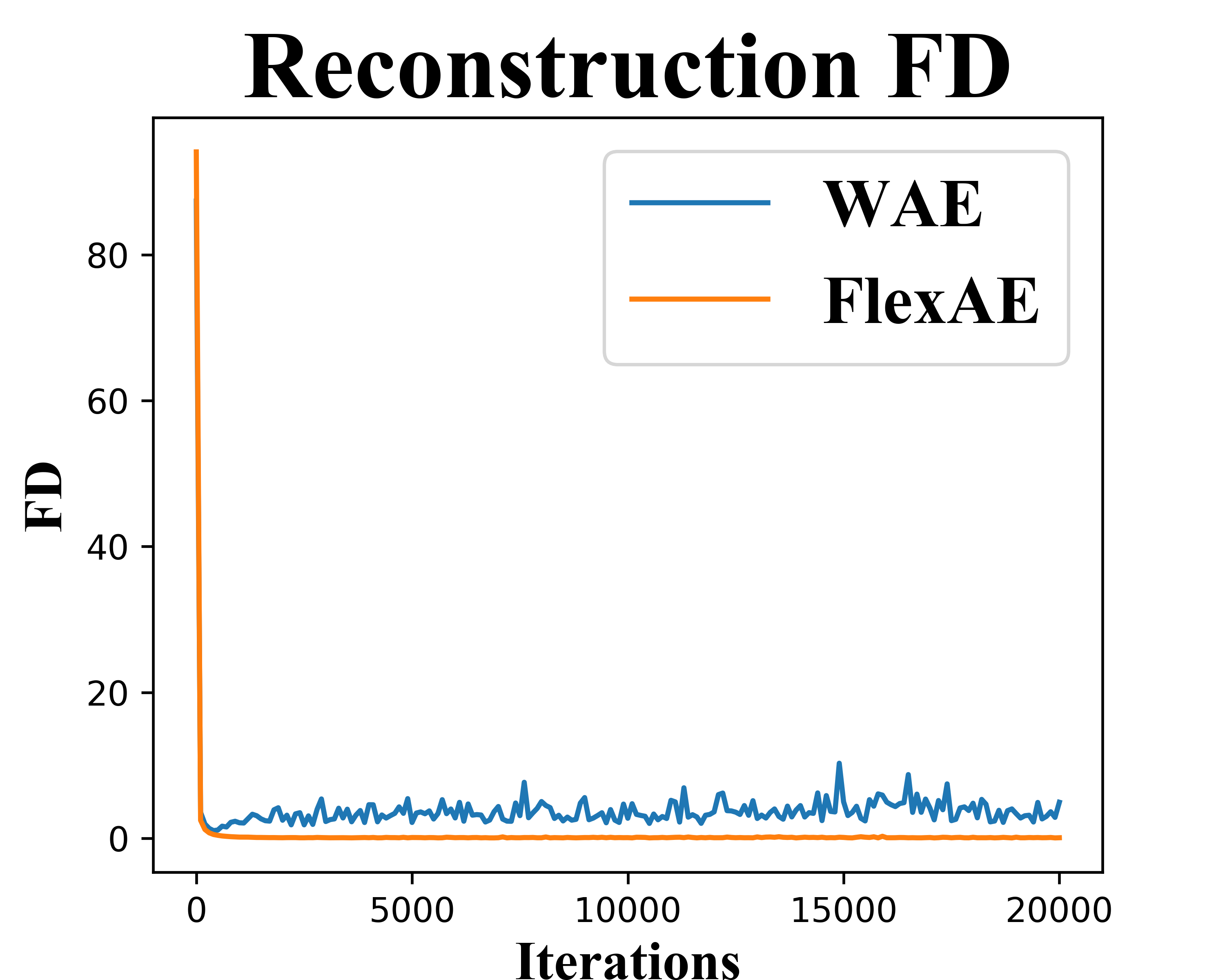}
        \caption{}
        \label{fig:recon_fd_comp}
    \end{subfigure}%
    ~
    \begin{subfigure}[t]{0.24\textwidth}
        \centering
        \includegraphics[trim={0 0 25 0}, clip, keepaspectratio,  width=\textwidth]{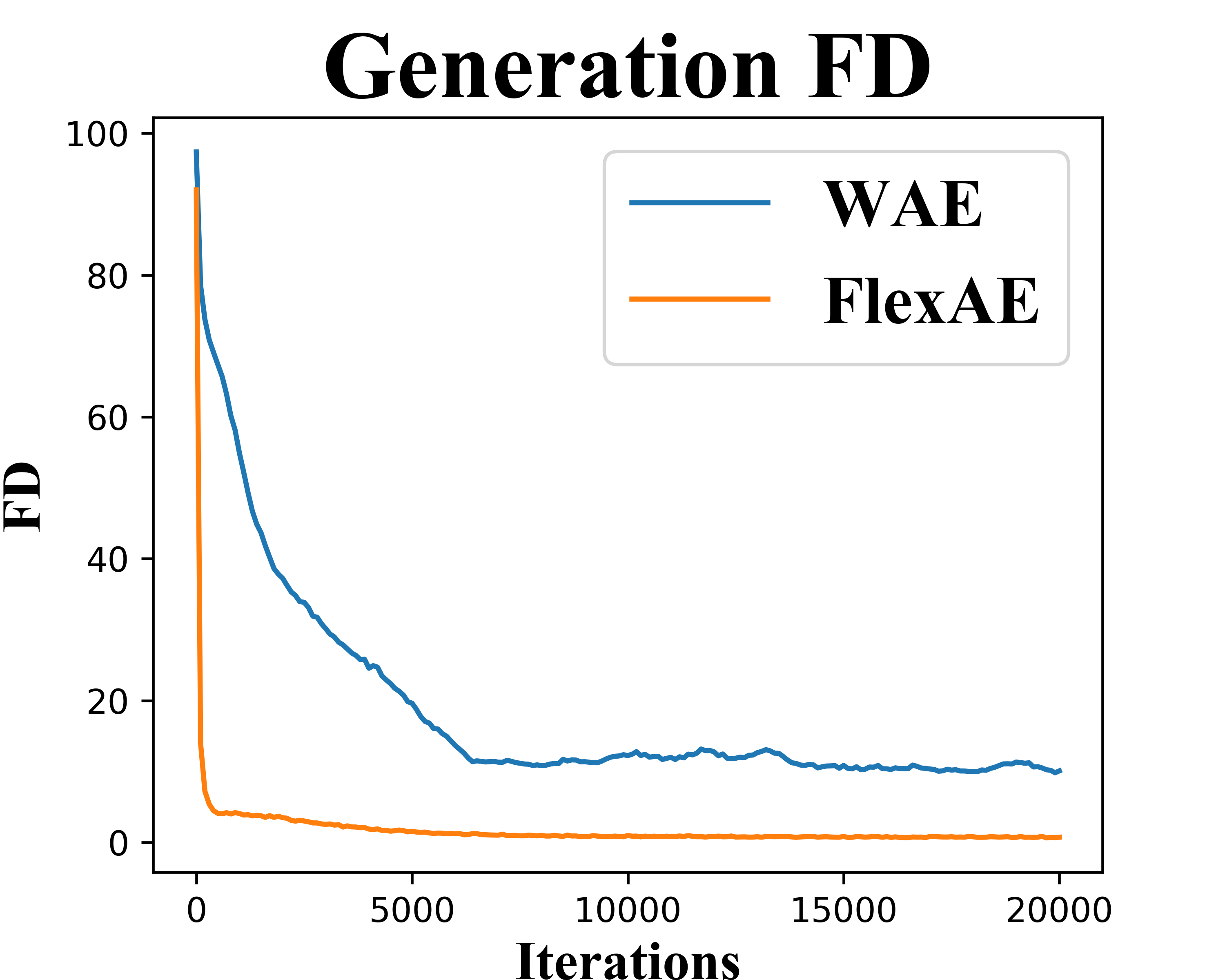}
        \caption{}
        \label{fig:gen_fd_comp}
    \end{subfigure}%
    \caption{Comparison of RAEs with  fixed and learnable latent priors on a synthetic dataset. It is seen that the Wasserstein distance between $P_z$ and $Q_z$ reduce faster in the case of FlexAE compared to a fixed prior WAE, leading to a better FD.}
    \label{fig:infeasible}
\end{figure*}

\begin{figure*}[ht!]
    \centering
    \begin{subfigure}[t]{0.19\textwidth}
        \centering
        \includegraphics[trim={6.5 6.5 6.5 6.5}, clip, keepaspectratio, width=\textwidth]{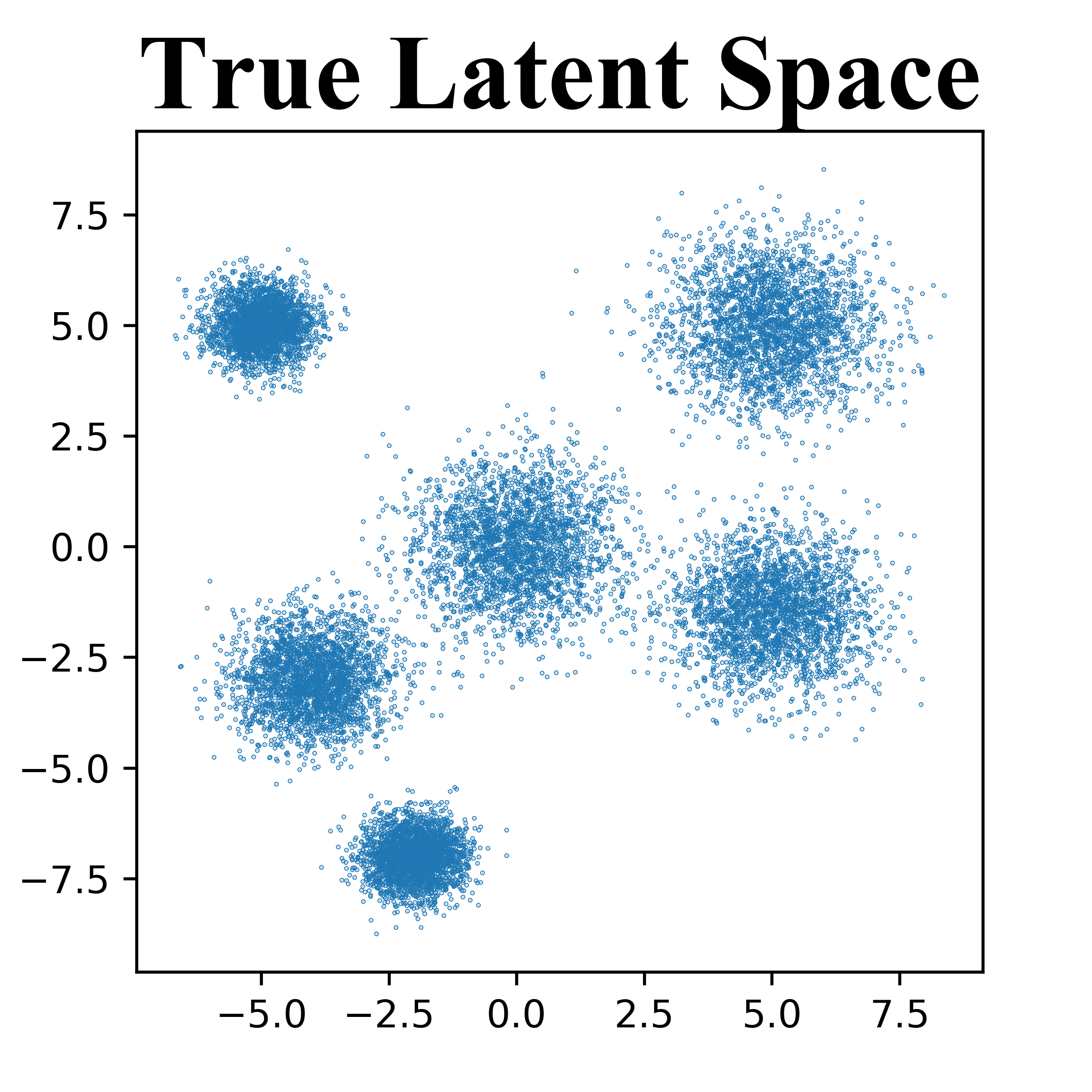}
        \caption{}
        \label{fig:true_lat_gmm}
    \end{subfigure}%
    ~ 
    \begin{subfigure}[t]{0.19\textwidth}
        \centering
        \includegraphics[trim={6.5 6.5 6.5 6.5}, clip, keepaspectratio, width=\textwidth]{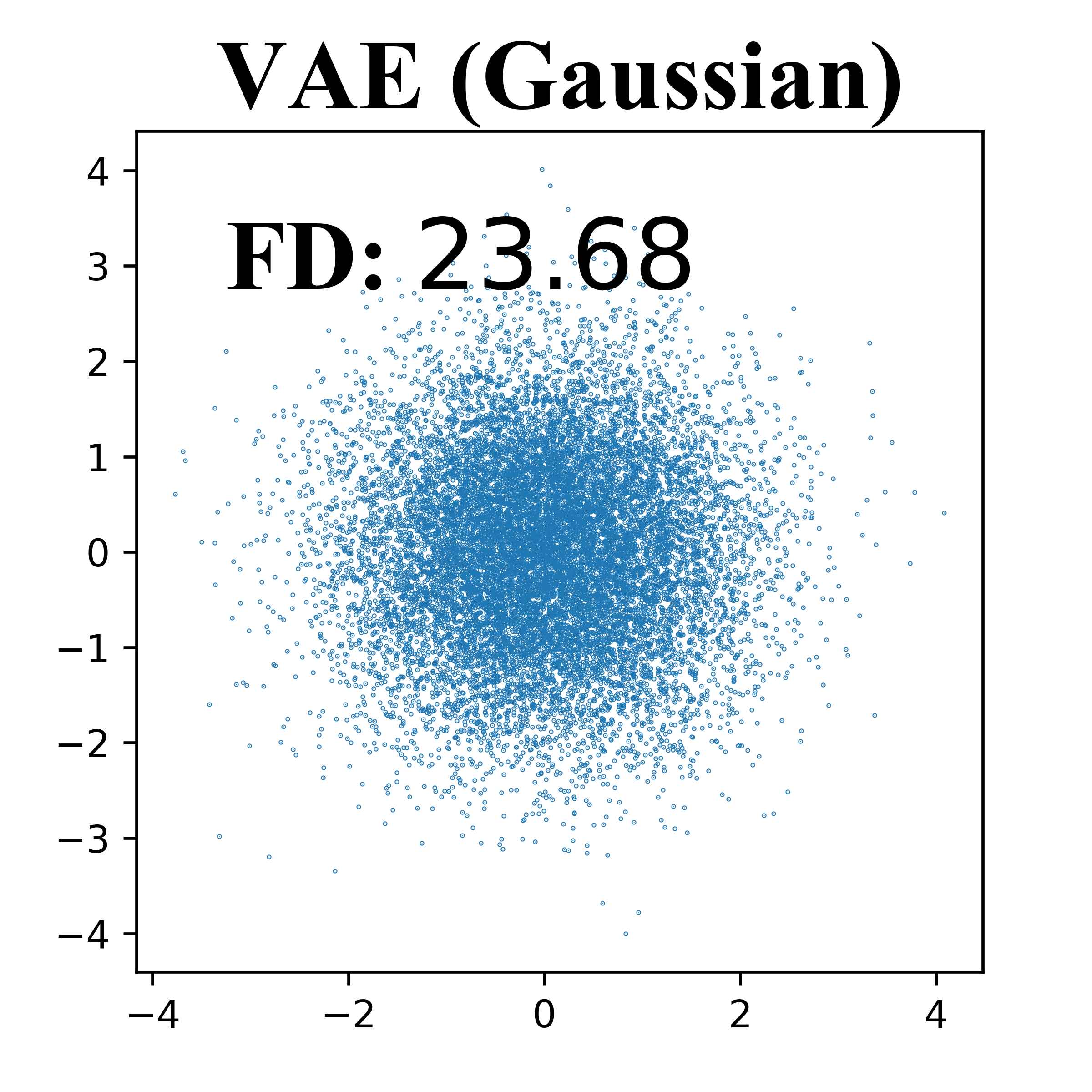}
        \caption{}
        \label{fig:vae_lat_true_gmm}
    \end{subfigure}%
    ~
    \begin{subfigure}[t]{0.19\textwidth}
        \centering
        \includegraphics[trim={6.5 6.5 6.5 6.5}, clip, keepaspectratio,width=\textwidth]{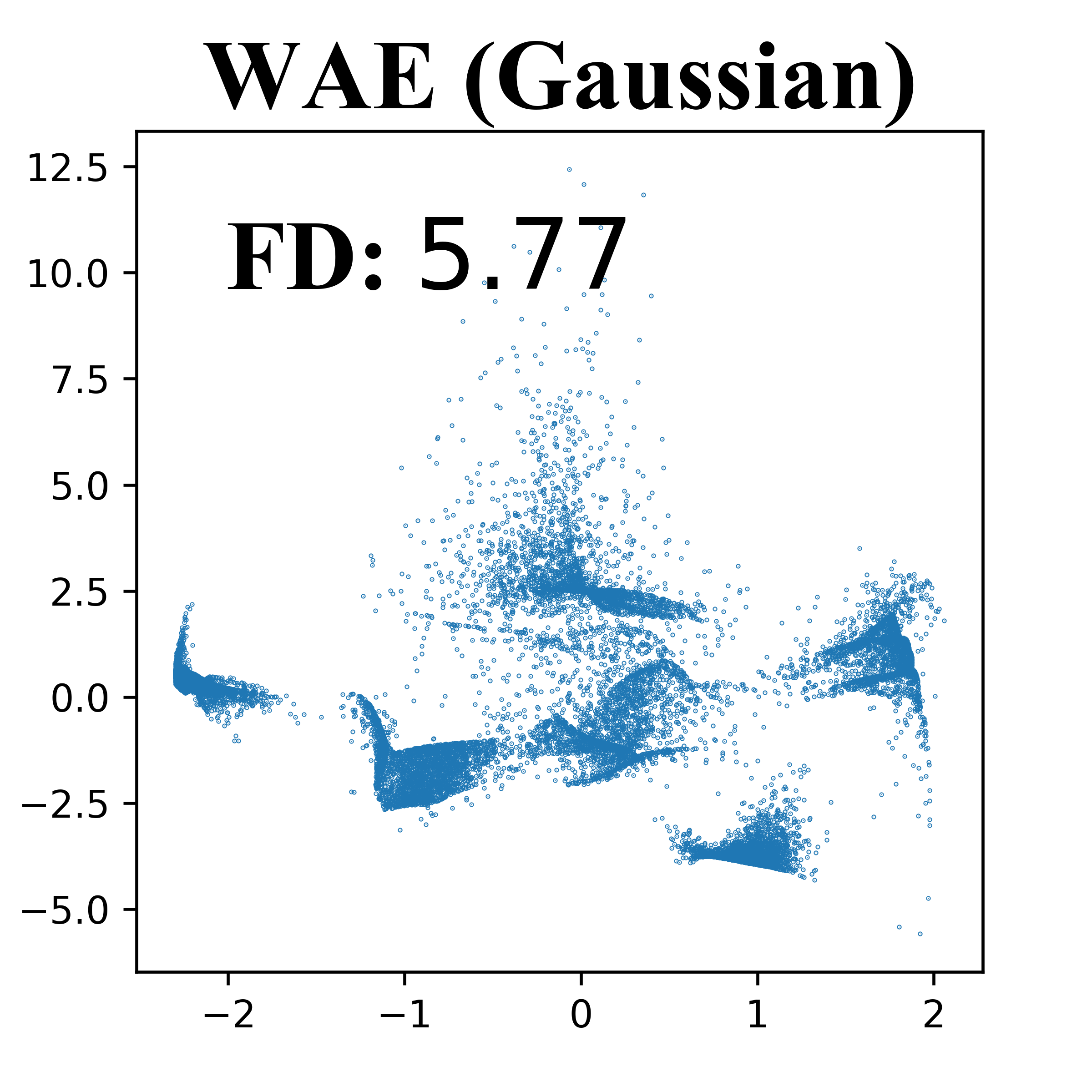}
        \caption{}
        \label{fig:wae_normal_lat_true_lat_gmm}
    \end{subfigure}%
    ~ 
    \begin{subfigure}[t]{0.19\textwidth}
        \centering
        \includegraphics[trim={6.5 6.5 6.5 6.5}, clip, keepaspectratio, width=\textwidth]{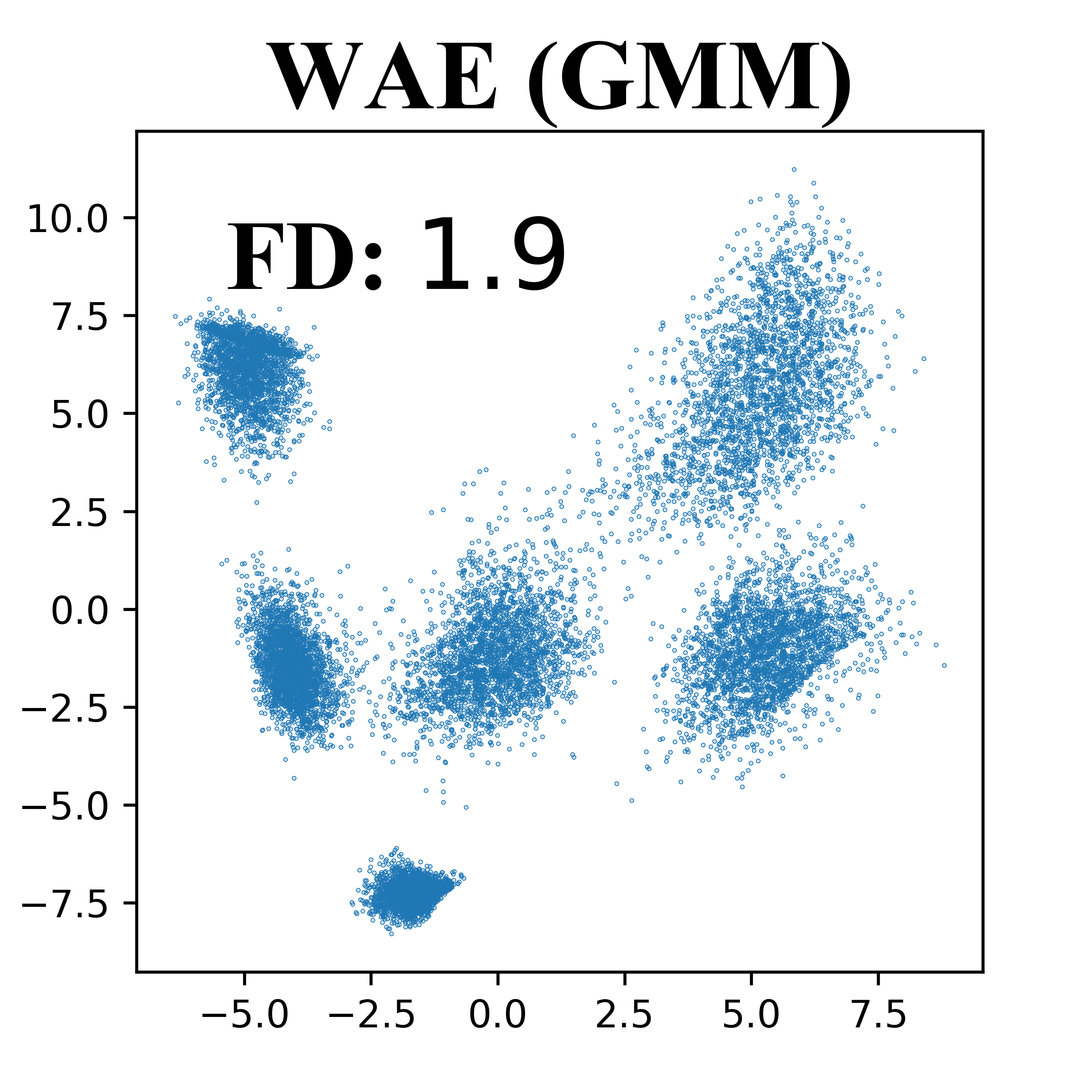}
        \caption{}
        \label{fig:wae_gmm_lat_true_gmm}
    \end{subfigure}%
    ~ 
    \begin{subfigure}[t]{0.19\textwidth}
        \centering
        \includegraphics[trim={6.5 6.5 6.5 6.5}, clip, keepaspectratio, width=\textwidth]{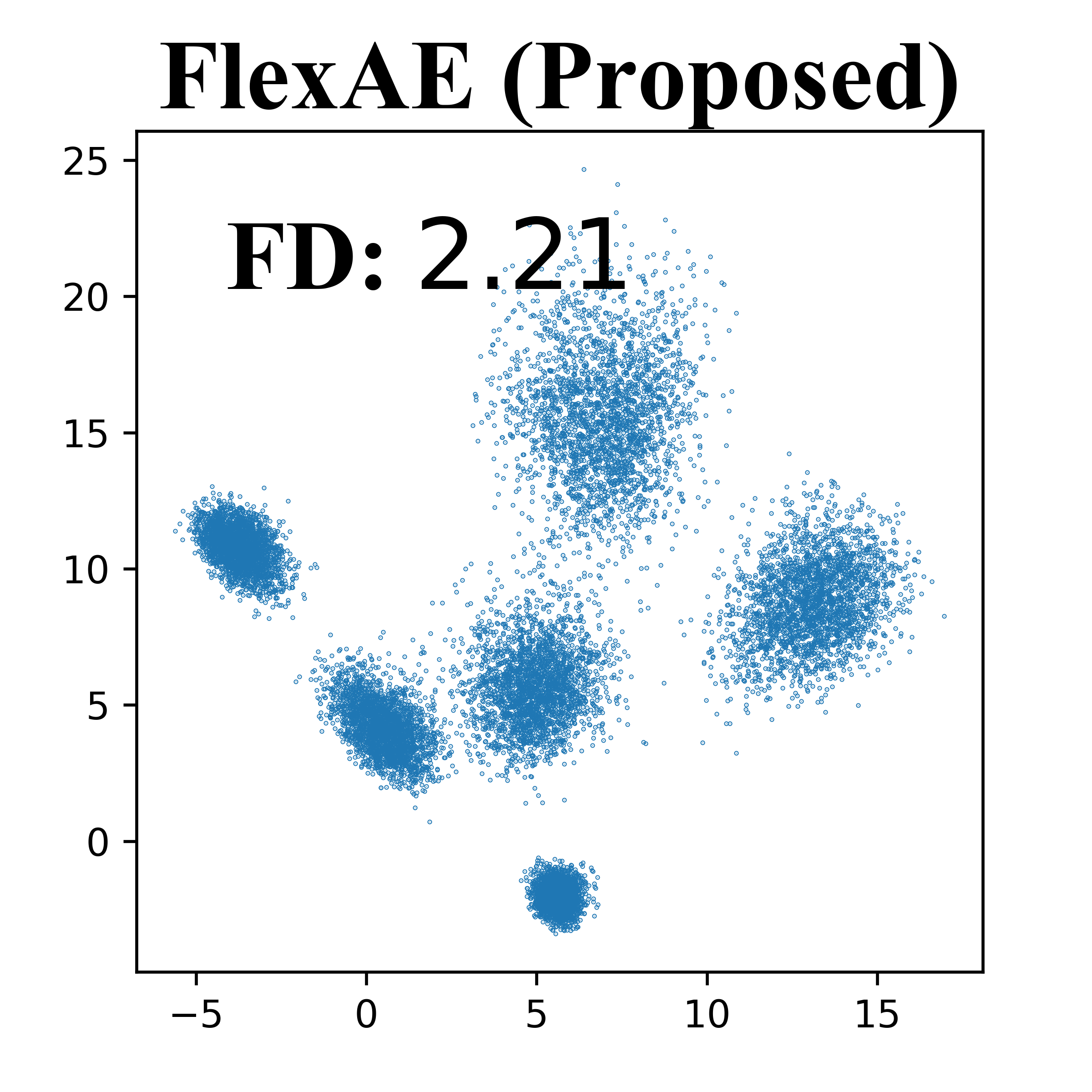}
        \caption{}
        \label{fig:flexae_lat_true_gmm}
    \end{subfigure}%
    \caption{Visualization of (a) true latent space; (b) latent space learned by the VAE \cite{kingma2013autoencoding}; (c) latent space learned by the WAE \cite{WAE} with Normal prior; (d) latent space learned by the WAE \cite{WAE} with GMM prior; and (e): latent space learned by the proposed FlexAE model, along with generation Fréchet Distance (FD) in each case. For multimodal data, model with multimodal prior (WAE-GMM) and FlexAE perform better.}
    \label{fig:fixed_prior_vs_flexible}
\end{figure*}

One way to visualize Eq. \ref{eqn:wae_relaxed_obj} is that there is an reconstruction objective (term a) and a regularizer in the form of a divergence metric (term b). Under this view, the issue mentioned in Theorem 1 could be seen to be originating because of the choice of a `wrong' regularizer. Noting this, a few recent work \cite{ghosh2020from,kumar2020regularized} have suggested to completely get rid of the latent regularizer but use an ex-post latent code sampler such as GMM, Markov-Chain Monte-Carlo (MCMC) or a GAN on the latent space after a free-form AE (only term a of Eq. \ref{eqn:wae_constrained_obj}) is trained. While this technique will theoretically avoid the problem mentioned, we argue that it imposes another practical issue when there is finite data.
\par  It is well-observed that with sufficient model capacity, a deterministic AE when trained on a finite amount of data will lead to an increased variance (over-fitting). This is because the Encoder can settle with a trivial solution for $q_{\phi^*}(\vz|\vx_i)$ which is  Dirac-deltas at all input data points $\vx_i$ \cite{rezende2018taming}. Subsequently, the post-hoc sampler (e.g. GAN) will learn to sample from finite set of Dirac-deltas \cite{arora2017gans} resulting in a non-smooth latent space and poor generalization. On the other hand, as seen in the previous section, a high bias or over-regularization will also impact the generation quality. This is the infamous bias-variance trade-off that warrants a flexible prior which could facilitate the operation of an AE-model at different points of the bias-variance points. Note that this problem may arise with models with  stochastic Encoders too. In the subsequent sections, we propose a model that can effectively handle both the issues.

\subsection{Flexibly Learning Prior: FlexAE}\label{proposed-model}
Based on the discussion so far, fixing a prior makes the optimization objective infeasible and no prior leads to over-fitting.  To alleviate these, we propose to flexibly learn the latent prior jointly with the AE-training by introducing an additional state-space in the objective of an WAE as follows: 

\begin{gather}
\begin{split}
D_{FlexAE}(P_X,P_{\theta^*})&=\mathop{\inf}_{\psi, \phi, \theta}\Bigg(\underbrace{\mathop{\mathbb{E}}_{P(\vx)}\mathop{\mathbb{E}}_{Q(\vz|\vx)}\Big[c(\vx, D_\theta(\vz)\Big]}_{\text{a}} +\\
&\qquad \lambda \cdot \underbrace{ \vphantom{\mathop{\mathbb{E}}_{P(\vx)}} D_{Z}(Q_\phi(\vz)||P_\psi(\vz)) }_\text{b}\Bigg)
\end{split}
\label{eqn:flexae_obj}
\raisetag{30pt}
\end{gather} where $P_\psi(\vz)$ is a learnable latent prior  parmaterized using a neural network called the Prior-Generator (P-GEN), $G_\psi$, that takes an $m' \ge n$ dimensional isotropic Gaussian distribution as the input and generates sample from an $m$-dimensional $P_\psi(\vz)$ (refer Fig. \ref{fig:data_gen_flexAE}). $\theta^*$ denotes the optimal decoder parameters. In our model, referred to as the Flexible AE or FlexAE, P-GEN is jointly trained with the AE to alternatively minimize the divergence measure and the reconstruction terms in Eq. \ref{eqn:flexae_obj}. Upon convergence, the output of the P-GEN forms the prior that is imposed on the latent space. In the following we show that not only that FlexAE doesn't suffer from the infeasibility problem but also helps in more flexible bias-variance trade-offs. First, it is to be noted that that  $D_{FlexAE} \leq  D_{WAE}$ and thus the new formulation does not harm the optimization. Next, the Theorem below states that the divergence measure can be brought to zero with FlexAE. 

\begin{theorem}
$\forall m' \ge n$, $D_{Z}(Q_\phi(\vz)||P_\psi(\vz))$ (term (b) in FlexAE objective (Eq. \ref{eqn:flexae_obj}) becomes zero for optimum set of parameters. 
\end{theorem}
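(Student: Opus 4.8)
The plan is to exhibit, for any fixed encoder parameters $\phi$, an explicit choice of P-GEN parameters $\psi$ that drives term (b) to zero, which suffices since the joint optimum can only do at least as well. The conceptual point is that the support-mismatch obstruction of Theorem \ref{thm:impossibility} evaporates once the prior is itself the pushforward of a Gaussian through a neural network: both $Q_\phi(\vz)$ and $P_\psi(\vz)$ then live on countable unions of $n$-dimensional manifolds, and the freedom $m' \ge n$ is precisely what is needed to generate an $n$-dimensional manifold distribution from the input noise.

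First I would characterize the target distribution. Writing $\mu$ for the continuous true-latent distribution on $\R^n$ and recalling that observed data satisfy $\vx = f(\vz)$, the aggregated posterior is the pushforward $Q_\phi(\vz) = (E_\phi \circ f)_{\sharp}\mu$ of $\mu$ through the continuous map $E_\phi \circ f : \R^n \to \R^m$. Hence, exactly as in the proof of Theorem \ref{thm:impossibility}, $Q_\phi$ is supported on $\mathcal{Z}\in\mathcal{Q}_m^n$, a countable union of (at most) $n$-dimensional manifolds.

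Next I would construct an ideal generator whose pushforward reproduces $Q_\phi$ exactly. Since $m' \ge n$, let $\pi : \R^{m'} \to \R^n$ be the coordinate projection onto the first $n$ coordinates; then $\pi_{\sharp}\gamma_{m'} = \gamma_{n}$, where $\gamma_{m'}$ and $\gamma_{n}$ denote the isotropic Gaussians on $\R^{m'}$ and $\R^n$. Because $\mu$ is continuous, a transport map $T:\R^n \to \R^n$ with $T_{\sharp}\gamma_{n} = \mu$ exists (the Knothe--Rosenblatt rearrangement, or the Brenier optimal-transport map) and can be taken continuous under mild regularity of $\mu$. Setting $G^{\star} = (E_\phi \circ f)\circ T \circ \pi : \R^{m'} \to \R^m$, the composition rule for pushforwards gives $G^{\star}_{\sharp}\gamma_{m'} = (E_\phi \circ f)_{\sharp}\big(T_{\sharp}\gamma_{n}\big) = (E_\phi \circ f)_{\sharp}\mu = Q_\phi(\vz)$, and $G^{\star}$ is continuous as a composition of continuous maps. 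Finally I would invoke universal approximation: since $G_\psi$ is a neural network with the stated activations, it can represent $G^{\star}$ to arbitrary closeness, so there are parameters $\psi$ for which $P_\psi(\vz) = (G_\psi)_{\sharp}\gamma_{m'}$ equals $Q_\phi(\vz)$, whence $D_{Z}(Q_\phi(\vz)||P_\psi(\vz)) = 0$ at this $\psi$, and a fortiori at the joint optimum.

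The main obstacle I expect is twofold: establishing $T_{\sharp}\gamma_{n} = \mu$ with a \emph{continuous} $T$ (which needs absolute-continuity/regularity assumptions on $\mu$), and upgrading ``arbitrarily close'' to ``exactly zero'', which requires $D_{Z}$ to be continuous with respect to weak convergence so that the universal-approximation limit actually attains $D_{Z}=0$. I would also flag where $m' \ge n$ is essential: if $m' < n$, then by the same Arjovsky--Bottou lemma the image of $G_\psi$ is a union of manifolds of dimension at most $m' < n$, so $P_\psi$ could never cover the $n$-dimensional support $\mathcal{Z}$ of $Q_\phi$, and the divergence would remain bounded away from zero.
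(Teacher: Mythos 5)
Your proposal is correct and reaches the same conclusion, but it takes a genuinely more constructive route than the paper. The paper's own proof argues purely at the level of achievable supports: it observes that the set of manifolds realizable as outputs of $G_\psi$ equals $\cup_{\eta\le m'}\mathcal{Q}_m^\eta$, notes that $m'\ge n$ gives $\mathcal{Q}_m^n\subseteq\mathcal{P}_\psi$, and from this containment of supports directly concludes that $P_\psi$ can match $Q_\phi$. That last step is a leap --- two distributions supported on the same $n$-dimensional manifold need not coincide --- and your argument is precisely what fills it: by writing $Q_\phi=(E_\phi\circ f)_{\sharp}\mu$ and exhibiting the explicit generator $G^{\star}=(E_\phi\circ f)\circ T\circ\pi$ with $G^{\star}_{\sharp}\gamma_{m'}=Q_\phi$, you show not merely that the right support is reachable but that the exact target measure is a Gaussian pushforward through a continuous map, which universal approximation then handles. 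Both arguments lean on the same idealized capacity assumptions (the paper's ``sufficiently large deep nets, sample size, and computation time''), but you are more honest about the residual analytic work: the existence and continuity of the transport map $T$ with $T_{\sharp}\gamma_n=\mu$, and the need for $D_Z$ to be continuous under weak convergence so that approximation in the limit actually yields $D_Z=0$ rather than infimum zero. Your closing remark on why $m'<n$ fails is not in the paper's proof but is consistent with, and a nice converse to, the mechanism of Theorem~\ref{thm:impossibility}. In short: same intuition, but your construction is the rigorous core that the paper's support-counting argument only gestures at.
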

\begin{proof}
Let, $\mathcal{P}_\psi$ denote the set of all possible manifolds on which the output of P-GEN network, $G_\psi$, may lie within $\mathbb{R}^m$. Given sufficiently large deep nets, sample size, and computation time,  $\mathcal{P}_\psi = \mathop{\cup}_{\eta \le m'} \mathcal{Q}_m^\eta$. As $m' \ge n$, this implies $\mathcal{Q}_m^n \subseteq \mathcal{P}_\psi$ which implies that $G_\psi$ can learn $P_\psi(\vz)$ to match $Q_\phi(\vz)$ driving $D_{Z}(Q_\phi(\vz)||P_\psi(\vz))$  to zero.
\end{proof}

\begin{table*}[t]
  \caption{Comparison of FID scores \cite{NIPS2017_7240} on real datasets. Lower is better.}
  \label{table:fidTable}
  \centering
  \begin{tabular}{cccccccccccc}
    \hline
    & \multicolumn{2}{c}{MNIST} & & \multicolumn{2}{c}{CIFAR10} & & \multicolumn{2}{c}{CELEBA} \\
    \cline{2-3} \cline{5-6} \cline{8-9}
    & Rec. & Gen. & & Rec. & Gen.  & & Rec. & Gen.  \\
    \hline
    VAE \cite{kingma2013autoencoding}                     & $65.10$ & $57.04$  & & $176.5$ & $169.1$ & & $62.36$ & $72.48$ \\
    $\beta$-VAE \cite{higgins2017beta}             & $7.91$  & $24.31$ & & $43.86$ & $83.59$ & & $30.06$ & $50.66$ \\
    VAE-VampPrior \cite{tomczak2017vae}          & $11.01$ & $49.75$ & & $107.33$ & $161.02$ & & $49.71$ & $64.26$ \\
    VAE-IOP \cite{takahashi2019variational}                & $8.01$ & $32.61$ & & $92.17$ & $141.92$ & & $41.52$ & $57.30$ \\
    WAE-GAN \cite{WAE}                & $8.06$ & $13.30$ & & $42.39$ & $72.90$ & & $29.34$ & $39.58$ \\
    AE + GMM (L$2$) \cite{ghosh2020from}        & $8.69$  & $12.14$ & & $41.45$ & $70.97$ & & $30.16$ & $43.89$ \\
    RAE + GMM (L$2$) \cite{ghosh2020from}       & $6.15$ & $7.30$ & & $40.48$ & $69.24$ & & $29.05$ & $35.30$ \\
    VAE + FLOW \cite{kingma2016improved}             & $8.62$ & $20.17$ & & $43.87$ & $73.28$ & & $36.31$ & $42.39$ \\
    $\text{InjFlow}^{ln}$ \cite{kumar2020regularized}   &  $7.40$ & $35.96$ & & $40.11$ & $78.78$ & & $27.93$ & $47.70$ \\
    $\text{InjFlow}^{ln}$ + GMM \cite{kumar2020regularized}  &  $7.40$ & $9.93$ & & $40.11$ & $68.26$ & & $27.93$ & $40.23$ \\
    $2$-S VAE \cite{dai2019diagnosing}              & $6.38$ & $7.41$ & & $47.03$ & $86.15$ & & $29.38$ & $37.85$ \\
    MaskAAE \cite{MaskAAE}                & $8.46$ & $10.52$ & & $58.40$ & $71.90$ & & $35.75$ & $40.49$ \\
    FlexAE (Proposed)                 & $\boldsymbol{4.33}$ & $\boldsymbol{4.69}$ & & $\boldsymbol{39.91}$ & $\boldsymbol{62.66}$ & & $\boldsymbol{21.17}$ & $\boldsymbol{25.96}$ \\
    \hline
  \end{tabular}
  \vspace{-2mm}
\end{table*}

\begin{table}[!htbp]
  \caption{Comparison of Precision/Recall scores \cite{Pre_rec} on real datasets. Higher is better.}
  \label{table:prdTable}
  \centering
  \resizebox{\columnwidth}{!}{
  \begin{tabular}{cccccc}
    \hline
                    & MNIST       & & CIFAR$10$   & &
                    CELEBA      \\
    \hline
    VAE \cite{kingma2013autoencoding}             & $0.69/0.76$ & & $0.23/0.47$ & & $0.47/0.58$ \\
    $2$S-VAE \cite{dai2019diagnosing}        & $0.97/0.98$ & & $0.47/0.76$ & & $0.75/0.72$ \\
    RAE + GMM (L$2$) \cite{ghosh2020from} & $0.98/0.98$ & & $0.61/0.87$ & & $0.74/0.75$ \\
    MaskAAE \cite{MaskAAE}          & $0.94/0.96$ & & $0.58/0.83$ & & $0.59/0.68$ \\
    FlexAE (Proposed)          & $\boldsymbol{0.99/0.99}$ & & $\boldsymbol{0.68/0.85}$ & & $\boldsymbol{0.89/0.88}$ \\
    \hline
  \end{tabular}
  }
  \vspace{-2mm}
\end{table}

\par
For implementation, we use MSE for $c$ in term (a) of Eq. \ref{eqn:flexae_obj}. $D_Z$, in principle can be chosen to be any distributional divergence such as Kullback-Leibler divergence (KLD), Jensen–Shannon divergence (JSD), Wasserstein Distance and so on. In this work, we propose to use Wasserstein distance and utilize the principle laid in \cite{arjovsky2017wasserstein, gulrajani2017improved}, to optimize the divergence term (b) in Equation \ref{eqn:flexae_obj}). The loss functions used for different blocks of FlexAE are as follows:
\begin{enumerate}
    \item {Likelihood Loss - Realization of Term a in Eq. \ref{eqn:flexae_obj}:
    \begin{equation}
    L_{AE} = \frac{1}{s}\sum_{i=1}^{s}||\vx^{(i)}-D_\theta(E_\phi(\vx^{(i)}))||^2
    \label{eqn:lle_loss_mse}
    \end{equation}}
  \item {Wasserstein Loss - We use Wasserstein distance \cite{arjovsky2017wasserstein} for $D_Z$ (Term b Eq. \ref{eqn:flexae_obj}):}
    \begin{equation}
        \begin{split}
            L_{Critic} &= \frac{1}{s}\sum_{i=1}^{s}C_\kappa(\hat{\vz}^{(i)}) - \frac{1}{s}\sum_{i=1}^{s}C_\kappa (\vz^{(i)}) \\
            &\qquad 
            + \frac{\beta}{s}\sum_{i=1}^{s}\big(\lvert\lvert\nabla_{\vz_{avg}}^{(i)}C_\kappa (\vz_{avg}^{(i)})\lvert\lvert - 1\big)^2
        \end{split}
        \label{eqn:critic_loss}
    \end{equation}
    \begin{equation}
            L_{Gen} = -\frac{1}{s}\sum_{i=1}^{s}C_\kappa(\hat{\vz}^{(i)})
            \label{eqn:gen_loss}
    \end{equation}
    \begin{equation}
            L_{Enc} = \frac{1}{s}\sum_{i=1}^{s}C_\kappa(\vz^{(i)})
            \label{eqn:enc_loss}
    \end{equation}
  
\end{enumerate}
Where, $\vz^{(i)} = E_{\phi}(\vx^{(i)})$, $\hat{\vz}^{(i)} = G_{\psi}(\vn^{(i)})$ and $\vn^{(i)} \sim \mathcal{N}(0, I)$. $\vz_{avg}^{(i)} = \alpha\vz^{(i)} + (1-\alpha)\hat{\vz}^{(i)}$,  $\alpha,\beta$ are hyper parameters, with $\alpha \sim \mathcal{U}[0, 1]$, and $\beta$ as in  \cite{gulrajani2017improved}. $E_{\phi}, D_\theta, G_\psi$, and $C_\kappa$ denote the encoder, decoder, latent generator and critic respectively. Also with the cost $c$ chosen as MSE (Eq. \ref{eqn:lle_loss_mse}), the LHS of the objective (Eq. \ref{eqn:flexae_obj}) becomes $2$-Wasserstein distance.\par
Figure \ref{fig:infeasible} demonstrates the benefit of FlexAE over a WAE,  where the performance of both the models is shown on a synthetic data: $\mathcal{\widetilde{Z}} = \mathbb{R}^5$ and $f:\mathbb{R}^5 \to \mathbb{R}^{128}$ is an arbitrary multi-layer perceptron (details in the Tech. Appendix). It is seen that, when $m=50$, Wasserstein distance between $P_z$ and $Q_z$ reduce faster and reaches much lower values in the case of FlexAE compared to a fixed prior WAE, leading to a better Fr\'{e}shet Distance on the generated data. \par 
Further, the P-GEN network allows FlexAE to better trade-off between over-fitting and under-fitting: former is addressed by having a regularizer in the form of finite capacity P-GEN, and latter is avoided by having a learnable P-GEN with sufficient capacity to represent the desired distribution (see Figure \ref{fig:bias_variance}). Figure \ref{fig:flexae_lat_true_gmm} demonstrates this effect where it is seen that the latent space learned by a FlexAE and a WAE with a GMM prior, on a synthetic data (details in  the Tech. Appendix) results in better generation as compared to the models with fixed uni-modal Gaussian priors (Note that this figure is to show that a flexible prior helps in learning but not to show impossibility). Finally, the data generation in FlexAE happens as follows - (i) sample from a primitive (Gaussian) distribution and pass it through the P-GEN to sample a point from the latent space  $p_{\psi}(\vz)$, (ii) input the latent sample through the Decoder to generate a data sample. Algorithm for training FlexAE can be found in the Tech. Appendix.
\section{Experiments and Results}
\label{sec:expt} 


\begin{figure*}[!htbp]
    \centering
    \begin{subfigure}[t]{0.24\textwidth}
        \centering
        \includegraphics[trim={2 2 2 2}, clip, keepaspectratio,width=\linewidth]{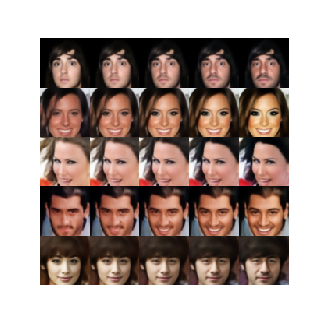}
        \caption{}
        \label{fig:train_pos}
    \end{subfigure}%
    ~ 
    \begin{subfigure}[t]{0.24\textwidth}
        \centering
        \includegraphics[trim={2 2 2 2}, clip, keepaspectratio, width=\linewidth]{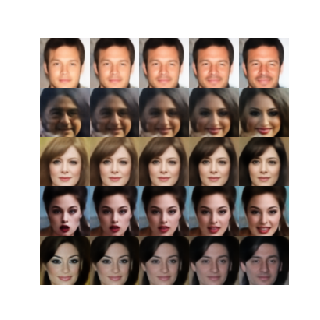}
        \caption{}
        \label{fig:test_neg}
    \end{subfigure}%
    ~ 
    \begin{subfigure}[t]{0.24\textwidth}
        \centering
        \includegraphics[trim={2 2 2 2}, clip, keepaspectratio, width=\linewidth]{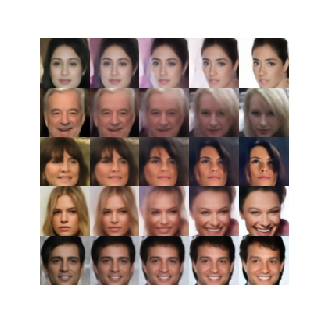}
        \caption{}
        \label{fig:test_linear}
    \end{subfigure}%
    ~ 
    \begin{subfigure}[t]{0.24\textwidth}
        \centering
        \includegraphics[trim={6.5 6.5 6.5 6.5}, clip, keepaspectratio, width=\linewidth]{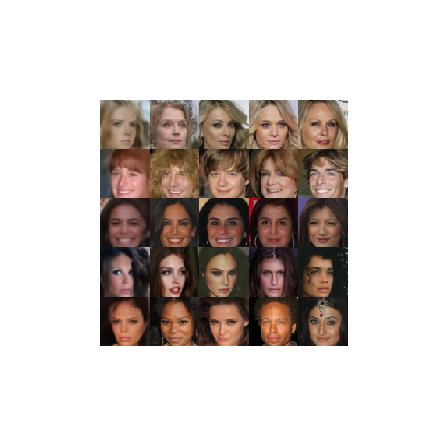}
        \caption{}
        \label{fig:nn}
    \end{subfigure}
    \caption{Interpolations in the latent space of FlexAE on CelebA. Each row in (a) and (b) presents manipulation of a particular face attribute (Big Nose, Heavy Makeup, Black Hair, Smiling, Male). The central image of each row of (a) and (b) is a true image from the train and test split with and without the attribute respectively.  Each row in (c) represents linear interpolation in the latent space between two randomly selected test samples in the first and the last entry. The first image in each row in (d) shows randomly generated samples using FlexAE and the next four entries are the four nearest neighbours from  training data.}
    \label{fig:interpolation_nn}
    \vspace{-5mm}
\end{figure*}
We consider three real-world datasets: MNIST \cite{MNIST}, CIFAR-$10$ \cite{krizhevsky2009learning}, and CelebA \cite{liu2015faceattributes} for our four set of experiments.
\subsection{Baseline Experiments}
\label{sec:expt-sota_comparison}
{\bf Methodology:}
The first task is to evaluate the FlexAE as a generative model. We use
Fr\'echet Inception Distance, (FID) \cite{NIPS2017_7240}, one of the most commonly used evaluation methods as it correlates well with human visual perception \cite{lucic2018gans}. However, as observed in \cite{Pre_rec}, FID, being uni-dimensional, fails to distinguish between different cases of failure (poor sample quality and limited variation in the samples). Thus, we also report the precision and recall metrics described in \cite{Pre_rec} along with FID, both of which are computed between the generated and the real test images. We compare FlexAE with a number of SoTA AE-based generative models that cover a broad class namely, VAE \cite{kingma2013autoencoding}, $\beta$-VAE \cite{higgins2017beta}, VAE-VampPrior \cite{tomczak2017vae}, VAE-IOP \cite{takahashi2019variational}, WAE \cite{WAE}, a plain with AE post-hoc GMM, RAE+GMM \cite{ghosh2020from}, VAE+Flow \cite{kingma2016improved}, InjFlow \cite{kumar2020regularized}, 2-stage VAE \cite{dai2019diagnosing} and MaskAAE \cite{MaskAAE}, with same architectures (see Tech. Appendix).\par\noindent
{\bf Results:}
Table \ref{table:fidTable} compares the average reconstruction and generation FID scores (lower is better) of FlexAE over three executions (variance $\pm 0.59$) with other AE-based generative models. It is seen that while models with parametric learnable  priors (VampPrior, IOP, Flow) offer some improvement over the naive VAE, they are non optimum. It is also seen that complex prior models tend to over fit more (gap between the generation and reconstruction FIDs). Further, having the ``right'' dimensional latent space seems to have significant impact (2SVAE, MaskAAE). A relatively better performance of RAE+GMM, InjFlow shows that while absence of prior imposition will reduce the bias, it might lead to over fitting. Finally, FlexAE offers the best performance on all three datasets as compared to other AE based generative models and its performance on MNIST and CelebA are comparable to that of the GANs. A similar trend is observed with the Precision/Recall numbers in  Table \ref{table:prdTable} (We only use better SoTA models for comparison). It is seen that FlexAE offers significantly better numbers in terms of both Precision and Recall confirming its effectiveness in generating samples that are of both high quality and variety.

\subsection{Effect of Latent Space Dimensionality}
\textbf{Methodology:}
To study how the latent space dimensionality affects the generation quality of the RAE, we train  FlexAE and WAE models with varying $m$.
\par\noindent \textbf{Results:} As presented in Table \ref{table:m_vs_fid}, with increasing $m$, the reconstruction FID decreases for both WAE and FlexAE models. However, the generation FID of WAE models increases with $m$. While generation FID of FlexAE remains almost constant. This shows that FlexAE can achieve better optimum irrespective of the chosen model dimensionality.
\begin{table}[!htbp]
\vspace{-2mm}
\caption{Variation of FID w.r.t. bottleneck layer dimension, $m$. For MNIST, $m_b=20$ and for CELEBA $m_b = 64$.}
  \label{table:m_vs_fid}
  \resizebox{\columnwidth}{!}{
  \centering
  \begin{tabular}{cccccccccccc}
    \hline
    $m$ & \multicolumn{5}{c}{MNIST} & & \multicolumn{5}{c}{CELEBA} \\
    \cline{2-6} \cline{8-12}
    & \multicolumn{2}{c}{Rec.} & & \multicolumn{2}{c}{Gen.} & & \multicolumn{2}{c}{Rec.} & & \multicolumn{2}{c}{Gen.} \\
    & WAE & FlexAE & & WAE & FlexAE & & WAE & FlexAE & & WAE & FlexAE \\
    \cline{2-3} \cline{5-6} \cline{8-9} \cline{11-12}
    $m_b$ & $7.16$ & $5.59$ & & $14.32$ & $5.99$ & & $30.12$ & $24.45$ & & $40.23$ & $26.09$ \\
    $2m_b$ & $5.17$ & $3.22$ & & $23.11$ & $4.22$ & & $29.34$ & $21.17$ & & $39.58$ & $25.96$ \\
    $4m_b$ & $3.12$ & $1.42$ & & $35.20$ & $5.92$ & & $28.21$ & $21.13$ & & $49.34$ & $28.36$ \\
    \hline
  \end{tabular}
  }
  \vspace{-2mm}
\end{table}

\subsection{Bias-Variance Trade-off}
\label{sec:expt-bias_variance}
{\bf Methodology:}
To  evaluate our claims on the Bias-Variance trade-off, we repeat the generation experiments by varying the capacity of the prior generator (P-GEN) from very low capacity to very high capacity (details of models in the Tech. Appendix), on a small subset of training data ($5000$ samples). Sub-sampling is to ensure that effect of bias-variance is apparent. Models of huge capacity are needed to observe similar effects of the entire dataset. 

\par\noindent {\bf Results:} 
Figure \ref{fig:bias_variance} shows that there is a performance drop at either sides of moderate capacity models (Model 3 or Model 4). As the capacity of the P-GEN increases, the reconstruction FID decreases while generation FID increases, signalling over fitting. A reverse observation could be made about the high-bias low capacity models. This confirms our hypothesis of existence of a Bias-Variance curve. Please note, in Experiment 1 and 2, the architecture of the P-GEN was kept fixed across all datasets. Therefore, even though the mere architectural choice for the P-GEN imposes a bias, the flexibility  (needed for trade off) is ensured in terms of the parameters of P-GEN.

\begin{figure}[!htbp]
    \vspace{-2mm}
    \centering
    \includegraphics[trim=20 2 20 2, clip, keepaspectratio, height=2 in]{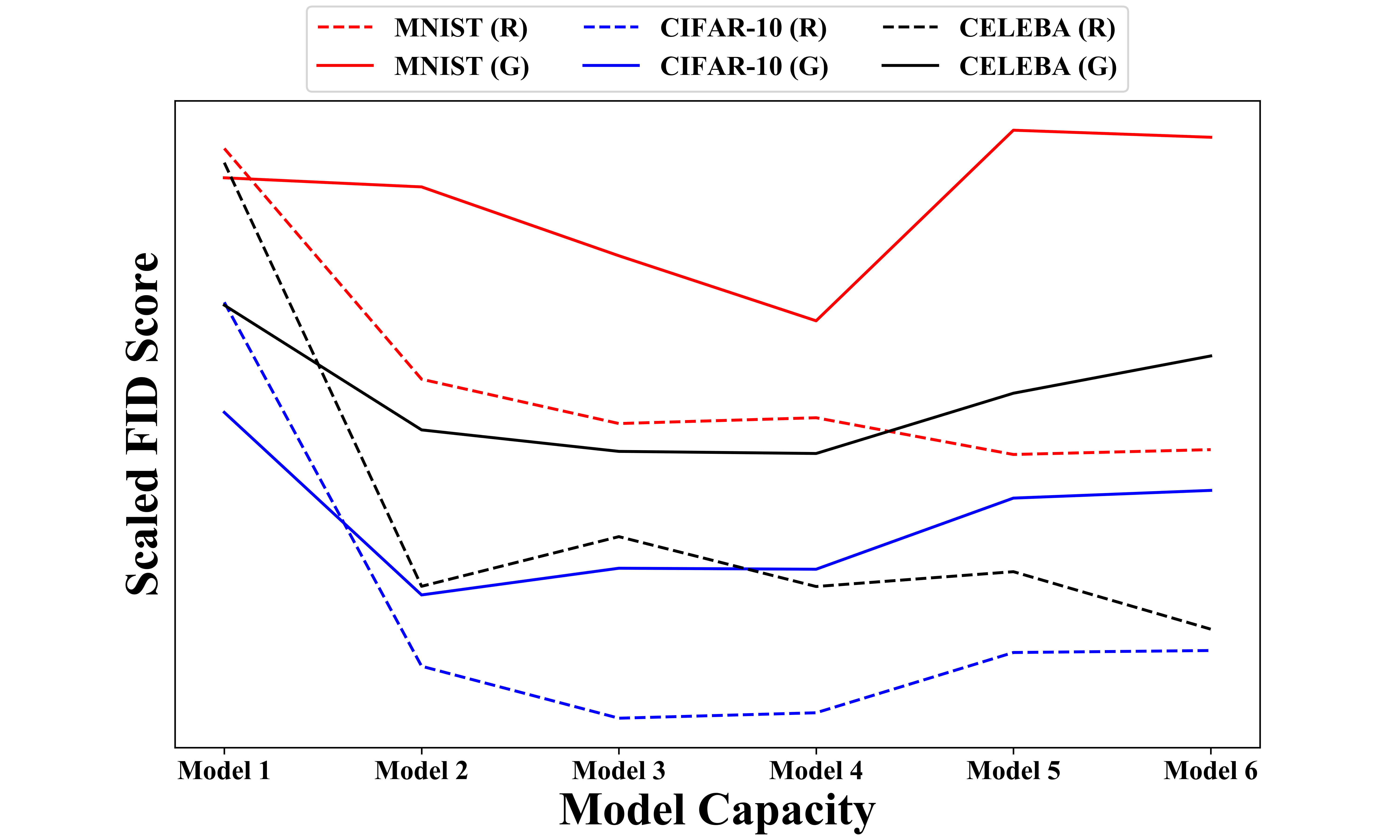}
    \caption{Variation of reconstruction and generation FID scores on limited training datasets with varying P-GEN capacity, demonstrating bias-variance trade-off. Models (1-6) are presented in increasing order of capacity.}
    \label{fig:bias_variance}
    \vspace{-4mm}
\end{figure}

\subsection{Smoothness of the Latent Space}
\label{sec:expt-smooth_gen}
{\bf Methodology:}
To ascertain the smoothness of the learned latent space and that FlexAE doesn't over fit, we conduct a few qualitative experiments on the CelebA dataset: (i) Generation by transitions in the latent space along the direction of a particular attribute, (ii) transitions in the latent vectors between two generated samples and (iii) plot of the Nearest neighbour samples for a given generated image, from the training set, with interpolations done in the latent space.

\par\noindent {\bf Results:} The outcome of these experiments are shown in Figure \ref{fig:train_pos}, \ref{fig:test_neg}, \ref{fig:test_linear} and \ref{fig:nn}. Each row in (a) and (b) presents manipulation of a particular face attribute (Big Nose, Heavy Makeup, Black Hair, Smiling, Male). The middle image in each row of (a) corresponds to a training sample with the attribute present and the middle image of a row in (b) represents a sample without the attribute. Each row in (c) represents linear interpolation in the latent space between two randomly selected test samples in the first and the last column. The interpolation results presented in (a), (b), and (c) clearly depicts the smoothness of the learnt latent space of FlexAE as it provides smooth transition between any two random images. The first image in each row in (d) shows a randomly generated sample using FlexAE and the next four entries are the four nearest neighbours from the training split. Visual dissimilarity between any generated image and its nearest neighbours from the training split confirms that FlexAE has not merely memorized the training set. (cf. Tech. Appendix for more qualitative results).

\section{Conclusion}
In this paper, we systematically studied the effect of the latent prior on the AE-based generative models. We demonstrated that fixing any kind of prior in a data-agnostic way is detrimental to the performance. We also showed that with finite data, there exists a bias-variance trade-off with imposition of any prior on the latent space. We proposed a model called the FlexAE that can potentially operate at different points of the bias-variance curve, and empirically demonstrated its efficacy.
\section{Ethical Impact}
\small{
Our work falls broadly in the area of techniques dealing with automatic data generation. We have touched both the theoretical as well as experimental aspects of this problem in our work. We believe our results/findings should be available for all scientific community for furthering research and development in this area, independent of their background (e.g., race, caste,creed,gender,nationality etc.). Datasets used in our experiments are pretty standard, and we do not think our work poses any specific ethical questions or creates potential biases against any particular groups.}

\title{To Regularize or Not To Regularize? The Bias Variance Trade-off in Regularized AEs \\\vspace{4mm} \large{Technical Appendix}}

\author{}
\date{}

\maketitle

\section{Details of Datasets}
In this section, we describe the steps involved in synthetic dataset creation and provide relevant details (such as dimension, number of training and test examples and so on) of the synthetic datasets and the real datasets used in our work to experimentally validate our theoretical claims.
\subsection{Synthetic Datasets}
Synthetic data has been generated using a two step process. The steps involved in creating the dataset (corresponding to Figure 3 in the main paper) where the true latent space is GMM are listed below.
\begin{enumerate}
    \item \textbf{Step 1:} Six two-dimensional Gaussian distributions are used to generate true latent space of the synthetic dataset. $z_{k1}^{(i)}$ and $z_{k2}^{(i)}$ denotes the $1^{st}$ and the $2^{nd}$ dimensions of the $i^{th}$ sample from the $k^{th}$ distribution respectively. The distributions are as mentioned below:\\\\
    $$\begin{bmatrix}z_{11}^{(i)} \\ z_{12}^{(i)}  \end{bmatrix} \sim \mathcal{N}\Bigg(\begin{bmatrix}0\\ 0\end{bmatrix}, \begin{bmatrix}1 & 0 \\ 0  & 1 \end{bmatrix}\Bigg)$$\\
    $$\begin{bmatrix}z_{21}^{(i)} \\ z_{22}^{(i)}  \end{bmatrix} \sim \mathcal{N}\Bigg(\begin{bmatrix}5\\ 5\end{bmatrix}, \begin{bmatrix}1 & 0 \\ 0  & 1 \end{bmatrix}\Bigg)$$\\
    $$\begin{bmatrix}z_{31}^{(i)} \\ z_{32}^{(i)}  \end{bmatrix} \sim \mathcal{N}\Bigg(\begin{bmatrix}-5\\ 5\end{bmatrix}, \begin{bmatrix}0.5 & 0 \\ 0  & 0.5 \end{bmatrix}\Bigg)$$\\
    $$\begin{bmatrix}z_{41}^{(i)} \\ z_{42}^{(i)}  \end{bmatrix} \sim \mathcal{N}\Bigg(\begin{bmatrix}5\\ -1.5\end{bmatrix}, \begin{bmatrix}0.95 & 0 \\ 0  & 0.95 \end{bmatrix}\Bigg)$$\\
    $$\begin{bmatrix}z_{51}^{(i)} \\ z_{52}^{(i)}  \end{bmatrix} \sim \mathcal{N}\Bigg(\begin{bmatrix}-2\\ -7\end{bmatrix}, \begin{bmatrix}0.5 & 0 \\ 0  & 0.5 \end{bmatrix}\Bigg)$$\\
    $$\begin{bmatrix}z_{61}^{(i)} \\ z_{62}^{(i)}  \end{bmatrix} \sim \mathcal{N}\Bigg(\begin{bmatrix}-4\\ -3\end{bmatrix}, \begin{bmatrix}0.75 & 0 \\ 0  & 0.75 \end{bmatrix}\Bigg)$$
    
    \item \textbf{Step 2:} Next, a three layer MLP is used to map the two-dimensional points obtained from Step 1 to $128$-dimensional data points. Each layer consists of 128 neurons and non-linearity used in each layer is $\tanh, \exp, \tanh$ respectively. Weight and bias parameters of each layer is drawn randomly from the following three distributions respectively: $\mathcal{N}(0, 0.05), \mathcal{N}(0, 0.2), \mathcal{N}(0, 0.1)$.
\end{enumerate}

The dataset related to Figure 2 in the main paper is also generated using a 6 component GMM latent space. However, the true latent has dimension 5 and synthetic data has dimension 128 as before. The mean and variance of the Gaussian components of the true latent space are listed below.
$$\begin{bmatrix}z_{11}^{(i)} \\ z_{12}^{(i)} \\ z_{13}^{(i)} \\ z_{14}^{(i)} \\ z_{15}^{(i)} \end{bmatrix} \sim \mathcal{N}\left(\begin{bmatrix}0\\ 0 \\ 0 \\ 0 \\ 0 \end{bmatrix}, \begin{bmatrix}1 & 0 & 0 & 0 & 0\\ 0  & 1 & 0 & 0 & 0 \\ 0 & 0 & 1 & 0 & 0 \\ 0 & 0 & 0 & 1 & 0 \\ 0 & 0 & 0 & 0 & 1 \end{bmatrix}\right)$$\\

$$\begin{bmatrix}z_{21}^{(i)} \\ z_{22}^{(i)} \\ z_{23}^{(i)} \\ z_{24}^{(i)} \\ z_{25}^{(i)} \end{bmatrix} \sim \mathcal{N}\left(\begin{bmatrix}5\\ 5 \\ 5 \\ 5 \\ 5 \end{bmatrix}, \begin{bmatrix}1 & 0 & 0 & 0 & 0\\ 0  & 1 & 0 & 0 & 0 \\ 0 & 0 & 1 & 0 & 0 \\ 0 & 0 & 0 & 1 & 0 \\ 0 & 0 & 0 & 0 & 1 \end{bmatrix}\right)$$\\

$$\begin{bmatrix}z_{31}^{(i)} \\ z_{32}^{(i)} \\ z_{33}^{(i)} \\ z_{34}^{(i)} \\ z_{35}^{(i)} \end{bmatrix} \sim \mathcal{N}\left(\begin{bmatrix}-5\\ 5 \\ 3 \\ 4.5 \\ -6 \end{bmatrix}, 0.5\begin{bmatrix}1 & 0 & 0 & 0 & 0\\ 0  & 1 & 0 & 0 & 0 \\ 0 & 0 & 1 & 0 & 0 \\ 0 & 0 & 0 & 1 & 0 \\ 0 & 0 & 0 & 0 & 1 \end{bmatrix}\right)$$\\

$$\begin{bmatrix}z_{41}^{(i)} \\ z_{42}^{(i)} \\ z_{43}^{(i)} \\ z_{44}^{(i)} \\ z_{45}^{(i)} \end{bmatrix} \sim \mathcal{N}\left(\begin{bmatrix}5\\ -1.5 \\ -6.5 \\ 3 \\ 1 \end{bmatrix}, 0.95\begin{bmatrix}1 & 0 & 0 & 0 & 0\\ 0  & 1 & 0 & 0 & 0 \\ 0 & 0 & 1 & 0 & 0 \\ 0 & 0 & 0 & 1 & 0 \\ 0 & 0 & 0 & 0 & 1 \end{bmatrix}\right)$$\\

$$\begin{bmatrix}z_{51}^{(i)} \\ z_{52}^{(i)} \\ z_{53}^{(i)} \\ z_{54}^{(i)} \\ z_{55}^{(i)} \end{bmatrix} \sim \mathcal{N}\left(\begin{bmatrix}-2 \\ -7 \\ 9 \\ -1.5 \\ -4.5 \end{bmatrix}, 0.5\begin{bmatrix}1 & 0 & 0 & 0 & 0\\ 0  & 1 & 0 & 0 & 0 \\ 0 & 0 & 1 & 0 & 0 \\ 0 & 0 & 0 & 1 & 0 \\ 0 & 0 & 0 & 0 & 1 \end{bmatrix}\right)$$\\

$$\begin{bmatrix}z_{61}^{(i)} \\ z_{62}^{(i)} \\ z_{63}^{(i)} \\ z_{64}^{(i)} \\ z_{65}^{(i)} \end{bmatrix} \sim \mathcal{N}\left(\begin{bmatrix}-4 \\ -3 \\ -5.5 \\ 2 \\ 4 \end{bmatrix}, 0.75\begin{bmatrix}1 & 0 & 0 & 0 & 0\\ 0  & 1 & 0 & 0 & 0 \\ 0 & 0 & 1 & 0 & 0 \\ 0 & 0 & 0 & 1 & 0 \\ 0 & 0 & 0 & 0 & 1 \end{bmatrix}\right)$$\\

We have generated $15$k training examples and $10$k test examples for both of the synthetic datasets.

\subsection{Real Datasets}
The MNIST \cite{MNIST} database of gray scale handwritten digits consists of $60000$ training examples and $10000$ test samples. The CIFAR-10 \cite{krizhevsky2009learning} dataset consists of $60000$ tiny RGB images from $10$ classes, with $6000$ images per class. The standard split of this dataset consists of $50000$ training images and $10000$ test images. For experiments with MNIST and CIFAR-10, we use datasets as provided by Tensorflow API. CelebFaces Attributes Dataset (CelebA) \cite{liu2015faceattributes} is a large-scale face attributes dataset with $202599$ celebrity images, each with $40$ attribute annotations. For experiments with CELEBA, we resize the images to $64\times 64$ following many prior works \cite{kumar2020regularized, MaskAAE, dai2019diagnosing, ghosh2020from} in generative model. Table \ref{table:dataset_info} summarizes the important information about the real datasets used in this paper. Although, the test split of CELEBA dataset contains more than $10$k examples, we use $10$k randomly selected samples for FID and precision/recall score computation for all the datasets.

\begin{table}[htbp!]
  \caption{Details of Real Datasets}
  \label{table:dataset_info}
  \centering
  \resizebox{\columnwidth}{!}{
  \begin{tabular}{cccc}
    \hline
    & Dimension $(h\times w\times c)$ & Train Split Size & Test Split Size \\
    \hline
    MNIST \cite{MNIST} & $28\times28\times1$ & $60000$ & $10000$ \\
    CIFAR-$10$ \cite{krizhevsky2009learning} & $32\times32\times3$ & $50000$ & $10000$ \\    
    CELEBA \cite{liu2015faceattributes} & $64\times64\times3$ & $162770$ & $19962$ \\
    \hline
  \end{tabular}
  }
\end{table}

\section{Network Architectures}
Like any other AE based generative model, FlexAE has a reconstruction pipeline consisting of an encoder $(E_\phi)$ and a decoder $(D_\theta)$ network. We have introduced a P-GEN network consisting of a generator network $(G_\psi)$ and a critic network $(C_\kappa)$ to facilitate sampling from the latent space of the reconstruction pipeline. The generation pipeline involves the latent generator, $G_\psi$ and the image generator, $D_\theta$, meaning generation is a two-step process. First, we sample from the latent space using the latent generator, $G_\psi$. Next, the image generator, $D_\theta$ samples from the image space using the generated latent code. \par
Next, we describe the architectures of each of the components of FlexAE used for the synthetic and the real experiments.

\subsection{Synthetic Experiments}
Table \ref{table:synth_expt_arch} presents architectures of different networks used in conducting the synthetic experiment. VAE \cite{kingma2013autoencoding} consists of only encoder and decoder. WAE \cite{WAE} consists of encoder, decoder and critic. FlexAE involves all the networks.

\begin{table*}[!htbp]
  \caption{Network Architectures for Synthetic Experiment}
  \label{table:synth_expt_arch}
  \centering
  \begin{tabular}{cccc}
    \hline
    Encoder & Decoder & Generator & Critic \\
    \hline
    $\begin{aligned}[t]
        & \boldsymbol{x}\in\mathbb{R}^{128} \\
        & \to \text{FC}_{128} \to \text{ReLU} \\
        & \to \text{FC}_{m}
    \end{aligned}$ &
    
    $\begin{aligned}[t]
        & \boldsymbol{z}\in\mathbb{R}^{2} \\
        & \to \text{FC}_{128} \to \text{Tanh}
    \end{aligned}$ &

    $\begin{aligned}[t]
        & \boldsymbol{n}\in\mathbb{R}^{2} \\
        & \to \text{FC}_{128} \to \text{ReLU} \\
        & \to \text{FC}_{m}
    \end{aligned}$ &

    $\begin{aligned}[t]
        & \boldsymbol{z}\in\mathbb{R}^{2} \\
        & \to \text{FC}_{128} \to \text{ReLU} \\
        & \to \text{FC}_{128} \to \text{ReLU} \\
        & \to \text{FC}_{1}
    \end{aligned}$ \\
    \hline
    \multicolumn{4}{c}{$m=50$ for the first synthetic experiment (Figure 2 in the main paper)} \\ \multicolumn{4}{c}{and $m=2$ for the second synthetic experiment (Figure 3 in the main paper).}\\
    \hline
  \end{tabular}
\end{table*}

\subsection{Real Experiments}
For real experiments, the encoder, $E_\phi$ and the decoder, $D_\theta$ architectures are adopted from prior work \cite{kumar2020regularized}. The architecture of the encoder and the decoder networks vary from one dataset to another as presented in Table \ref{table:AEarchitecture}. However, the architectures of the generator, $G_\psi$, the critic, $C_\kappa$ and the regression network are fixed across all datasets as mentioned in Table \ref{table:GenCriticarchitecture}. The capacity (no. of trainable parameters) of $G_\psi$ and $C_\kappa$ is fairly small as compared to the AE to ensure that the adversarial training does not overfit the latent space. However, if the capacity of $G_\psi$ is too small then the bias in the latent space will increase, which will ultimately lead to a strong regularization. Therefore, we choose a moderate capacity generator and critic network. To study the effect to latent space dimensionality, $m$ on the generation quality, we train different FlexAE models with varying $m$ while everything else is kept fixed.\par
Table \ref{table:gen_architecture_bv} lists the architectures of different capacity generators used in the bias-variance experiment. Please note that the number of parameters of the latent generator model increases with model number in Table \ref{table:gen_architecture_bv}. Thus, the capacity of the Model-$1$ is the least and the capacity of the Model-$6$ is the highest.
\begin{table*}[htbp!]
  \caption{Encoder and Decoder Architectures for Real Datasets}
  \label{table:AEarchitecture}
  \centering
  \resizebox{\textwidth}{!}{
  \begin{tabular}{ccc}
    \hline
    MNIST & CIFAR10 & CELEBA \\
    \hline
    & Encoder & \\
    \hline
    $\begin{aligned}[t]
        & \boldsymbol{x}\in\mathbb{R}^{28\times28} \\
        &\to \text{Conv}_{64,4,1} \to \text{BN} \to \text{ELU}  \\
        &\to \text{Conv}_{128,4,1} \to \text{BN} \to \text{ELU}  \\
        &\to \text{Conv}_{256,4,2} \to \text{BN} \to \text{ELU}  \\
        &\to \text{Conv}_{512,4,2} \to \text{BN} \to \text{ELU}  \\
        &\to \text{Conv}_{512,4,1} \to \text{BN} \to \text{ELU}  \\
        &\to \text{Flatten} \to \text{FC}_{32}
    \end{aligned}$ &
    
    $\begin{aligned}[t]
        & \boldsymbol{x}\in\mathbb{R}^{32\times32\times3} \\
        &\to \text{Conv}_{128,4,1} \to \text{BN} \to \text{ELU}  \\
        &\to \text{Conv}_{256,4,2} \to \text{BN} \to \text{ELU}  \\
        &\to \text{Conv}_{512,4,2} \to \text{BN} \to \text{ELU}  \\
        &\to \text{Conv}_{1024,4,2} \to \text{BN} \to \text{ELU}  \\
        &\to \text{Conv}_{1024,4,1} \to \text{BN} \to \text{ELU}  \\
        &\to \text{Flatten} \to \text{FC}_{128}
    \end{aligned}$ & 
    
    $\begin{aligned}[t]
        & \boldsymbol{x}\in\mathbb{R}^{64\times64\times3} \\
        &\to \text{Conv}_{128,5,1} \to \text{BN} \to \text{ELU}  \\
        &\to \text{Conv}_{256,5,2} \to \text{BN} \to \text{ELU}  \\
        &\to \text{Conv}_{512,5,2} \to \text{BN} \to \text{ELU}  \\
        &\to \text{Conv}_{1024,5,2} \to \text{BN} \to \text{ELU}  \\
        &\to \text{Conv}_{1024,5,2} \to \text{BN} \to \text{ELU}  \\
        &\to \text{Flatten} \to \text{FC}_{128}
    \end{aligned}$ \\

    \hline
    
    & Decoder & \\
    
    \hline
    
    $\begin{aligned}[t]
        & \boldsymbol{z}\in\mathbb{R}^{32} \\
        &\to \text{FC}_{7 \times 7 \times 256} \to \text{BN} \to \text{ELU} \\
        &\to \text{Reshape}_{7 \times 7 \times 256} \\
        &\to \text{TCONV}_{512,4,1} \to \text{BN} \to \text{ELU} \\
        &\to \text{TCONV}_{256,4,1} \to \text{BN} \to \text{ELU} \\
        &\to \text{TCONV}_{128,4,2} \to \text{BN} \to \text{ELU} \\
        &\to \text{TCONV}_{64,4,2} \to \text{BN} \to \text{ELU} \\
        &\to \text{CONV}_{1,4,1} \to \text{Sigmoid}
    \end{aligned}$ &
    
    $\begin{aligned}[t]
        & \boldsymbol{z}\in\mathbb{R}^{128} \\
        &\to \text{FC}_{8 \times 8 \times 512} \to \text{BN} \to \text{ELU} \\
        &\to \text{Reshape}_{8 \times 8 \times 512} \\
        &\to \text{TCONV}_{1024,4,1} \to \text{BN} \to \text{ELU} \\
        &\to \text{TCONV}_{512,4,1} \to \text{BN} \to \text{ELU} \\
        &\to \text{TCONV}_{256,4,2} \to \text{BN} \to \text{ELU} \\
        &\to \text{TCONV}_{128,4,2} \to \text{BN} \to \text{ELU} \\
        &\to \text{CONV}_{3,4,1} \to \text{Sigmoid}
    \end{aligned}$ &
    
    $\begin{aligned}[t]
        & \boldsymbol{z}\in\mathbb{R}^{128} \\
        &\to \text{FC}_{16 \times 16 \times 512} \to \text{BN} \to \text{ELU} \\
        &\to \text{Reshape}_{16 \times 16 \times 512} \\
        &\to \text{TCONV}_{1024,5,1} \to \text{BN} \to \text{ELU} \\
        &\to \text{TCONV}_{512,5,1} \to \text{BN} \to \text{ELU} \\
        &\to \text{TCONV}_{256,5,2} \to \text{BN} \to \text{ELU} \\
        &\to \text{TCONV}_{128,5,2} \to \text{BN} \to \text{ELU} \\
        &\to \text{CONV}_{3,5,1} \to \text{Sigmoid}
    \end{aligned}$ \\
    
    \hline
  \end{tabular}
  }
\end{table*}

\begin{table*}[htbp!]
  \caption{Generator and Critic Architectures for Real Datasets}
  \label{table:GenCriticarchitecture}
  \centering
  \begin{tabular}{cc}
    \hline
    Generator & Critic \\
    \hline
    $\begin{aligned}[t]
        & \boldsymbol{n}\in\mathbb{R}^{m} \\
        & \to \text{FC}_{1024} \to \text{ReLU} \\
        & \to \text{FC}_{512} \to \text{ReLU} \\
        & \to \text{FC}_{m} \\
    \end{aligned}$ &
    
    $\begin{aligned}[t]
        & \boldsymbol{z}\in\mathbb{R}^{m} \\
        & \to \text{FC}_{512} \to \text{ReLU} \\
        & \to \text{FC}_{256} \to \text{ReLU} \\
        & \to \text{FC}_{128} \to \text{ReLU} \\
        & \to \text{FC}_{128} \to \text{ReLU} \\
        & \to \text{FC}_{1} \\
    \end{aligned}$ \\
    \hline
    \multicolumn{2}{c}{$m=32$ for MNIST and $m=128$ for CIFAR10, CELEBA.}\\
    \hline
  \end{tabular}
\end{table*}

\begin{table*}[htbp!]
  \caption{Generator Architectures for Bias-Variance Experiment}
  \label{table:gen_architecture_bv}
  \centering
  \resizebox{\textwidth}{!}{
  \begin{tabular}{cccccc}
    \hline
    Model 1 & Model 2 & Model 3 & Model 4 & Model 5 & Model 6 \\
    \hline
    $\begin{aligned}[t]
        & \boldsymbol{n}\in\mathbb{R}^{m} \\
        & \to \text{FC}_{16} \to \text{ReLU} \\
        & \to \text{FC}_{16} \to \text{ReLU} \\
        & \to \text{FC}_{m}
    \end{aligned}$ &
    $\begin{aligned}[t]
        & \boldsymbol{n}\in\mathbb{R}^{m} \\
        & \to \text{FC}_{64} \to \text{ReLU} \\
        & \to \text{FC}_{64} \to \text{ReLU} \\
        & \to \text{FC}_{m}
    \end{aligned}$ &
    $\begin{aligned}[t]
        & \boldsymbol{n}\in\mathbb{R}^{m} \\
        & \to \text{FC}_{256} \to \text{ReLU} \\
        & \to \text{FC}_{256} \to \text{ReLU} \\
        & \to \text{FC}_{m}
    \end{aligned}$ &
    $\begin{aligned}[t]
        & \boldsymbol{n}\in\mathbb{R}^{m} \\
        & \to \text{FC}_{1024} \to \text{ReLU} \\
        & \to \text{FC}_{1024} \to \text{ReLU} \\
        & \to \text{FC}_{m}
    \end{aligned}$ &
    $\begin{aligned}[t]
        & \boldsymbol{n}\in\mathbb{R}^{m} \\
        & \to \text{FC}_{1024} \to \text{ReLU} \\
        & \to \text{FC}_{1024} \to \text{ReLU} \\
        & \to \text{FC}_{1024} \to \text{ReLU} \\
        & \to \text{FC}_{1024} \to \text{ReLU} \\
        & \to \text{FC}_{m}
    \end{aligned}$ &
    $\begin{aligned}[t]
        & \boldsymbol{n}\in\mathbb{R}^{m} \\
        & \to \text{FC}_{2048} \to \text{ReLU} \\
        & \to \text{FC}_{2048} \to \text{ReLU} \\
        & \to \text{FC}_{2048} \to \text{ReLU} \\
        & \to \text{FC}_{2048} \to \text{ReLU} \\
        & \to \text{FC}_{m}
    \end{aligned}$ \\
    \hline
    \multicolumn{6}{c}{$m=32$ for MNIST and $m=128$ for CIFAR10, CELEBA.}\\
    \hline
  \end{tabular}
  }
\end{table*}

\begin{figure*}[ht!]
    \centering
    \begin{subfigure}[t]{0.24\textwidth}
        \centering
        \includegraphics[trim={2 2 2 2}, clip, keepaspectratio,width=\linewidth]{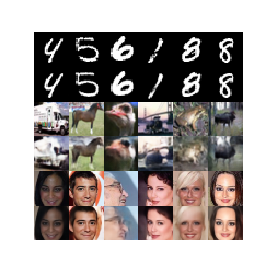}
        \caption{}
        \label{fig:recon}
    \end{subfigure}%
    ~ 
    \begin{subfigure}[t]{0.24\textwidth}
        \centering
        \includegraphics[trim={7 7 7 7}, clip, keepaspectratio, width=\linewidth]{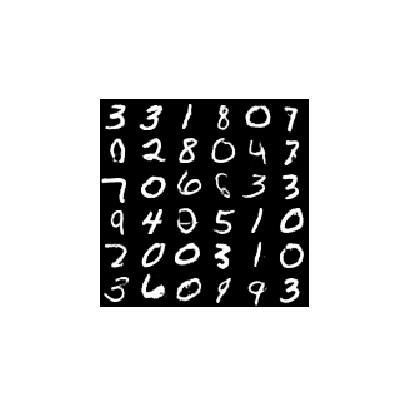}
        \caption{}
        \label{fig:mnist_gen}
    \end{subfigure}%
    ~ 
    \begin{subfigure}[t]{0.24\textwidth}
        \centering
        \includegraphics[trim={7 7 7 7}, clip, keepaspectratio, width=\linewidth]{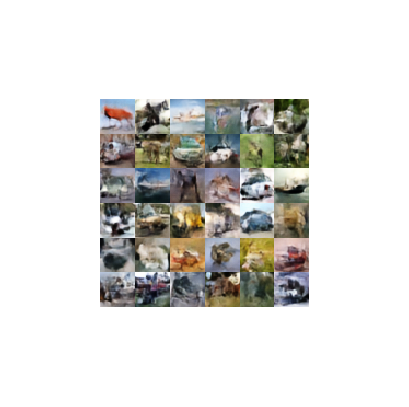}
        \caption{}
        \label{fig:cifar10_gen}
    \end{subfigure}%
    ~ 
    \begin{subfigure}[t]{0.24\textwidth}
        \centering
        \includegraphics[trim={7 7 7 7}, clip, keepaspectratio, width=\linewidth]{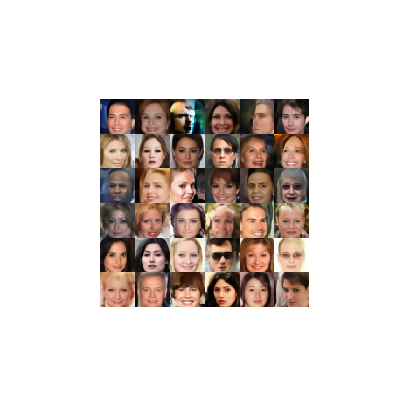}
        \caption{}
        \label{fig:celeba_gen}
    \end{subfigure}
    \caption{(a) Visualization of reconstruction quality of FlexAE model on randomly selected data from the test split of MNIST (first and second rows), CIFAR-$10$ (third and fourth rows) and CELEBA (fifth and sixth rows). The odd rows represent the real data and the even rows represent reconstructed data. Randomly generated samples from (b) MNIST, (c) CIFAR-$10$, and (d) CELEBA datasets using FlexAE model.}
    \label{fig:recon_gen}
\end{figure*}

\section{Training Algorithm, Hyper-parameters, Computing Resources and Average Runtime}
As mentioned in the main paper, the auto-encoder is required to be optimized jointly with the P-GEN to ensure regularization in the AE latent space. This regularization effectively enforces smoothness in the learnt latent space and prevents the AE from  overfitting on the training examples. In order to be able to satisfy the above requirement in practice, we optimize each of the four losses specified in the main paper in every training iteration. Specifically, in each learning loop, we optimize the $L_{AE}$, $L_{Critic}$, $L_{Gen}$, and $L_{Enc}$ in that order using a learning schedule. We use Adam optimizer for our optimization. The training algorithm is described in Algorithm \ref{alg:flexae-training-loop}. For real experiments we have trained our models for $130000$ iterations on each dataset with a batch size of $128$. We have used a machine with Intel\textsuperscript{\textregistered} Xeon\textsuperscript{\textregistered} Gold 6142 CPU, 376GiB RAM, and Zotac GeForce\textsuperscript{\textregistered} GTX 1080 Ti 11GB Graphic Card for all of our experiments. The average runtime for experiments on MNIST, CIFAR-10, and CELEBA is approximately 20 hours, 40 hours and 100 hours respectively.
\begin{algorithm}
    \caption{Pseudo code for the training loop of FlexAE \label{alg:flexae-training-loop}}
    \hspace*{\algorithmicindent} \textbf{Hyper-parameters:} $\eta_{AE} = 0.001$, $\eta_{Critic} = 0.0001$, $\eta_{Gen} = 0.0005$, $\eta_{Enc} = 0.00001$, $\text{AE\_OPT} = \text{Adam}(\text{lr}=\eta_{AE}, \beta_1=0.9, \beta_2=0.999)$, $\text{CRITIC\_OPT} = \text{Adam}(\text{lr}=\eta_{Critic}, \beta_1=0.0, \beta_2=0.9)$, $\text{GEN\_OPT} = \text{Adam}(\text{lr}=\eta_{Gen}, \beta_1=0.0, \beta_2=0.9)$, $\text{ENC\_OPT} = \text{Adam}(\text{lr}=\eta_{Enc}, \beta_1=0.0, \beta_2=0.9)$, $disc\_training\_ratio=5$. \\
    \begin{algorithmic}[1]
        \Function{Train}{}
            \For{$i \gets 1 \textrm{ to } training\_steps$}
                \State Minimize $L_{AE}$ and Update $\phi,~\theta$
                \For{$j \gets 1 \textrm{ to } disc\_training\_ratio$}
                    \State Minimize $L_{Critic}$ and Update $\kappa$
                \EndFor
                \State Minimize $L_{Gen}$ and Update $\psi$
                \State Minimize $L_{Enc}$ and Update $\phi$
            \EndFor
        \EndFunction
    \end{algorithmic}
\end{algorithm}

\section{Experimental Results}
In the main paper, the performance of FlexAE is evaluated mainly quantitatively, using standard metrics: FID \cite{NIPS2017_7240} and precision/recall \cite{Pre_rec} score. We have used $10000$ reconstructed and $10000$ generated samples against $10000$ test examples for computation of FID and precision/recall score for all datasets. It has been observed that FlexAE outperforms all other current state-of-the-art AE based generative models as measured using those metrics. In this section, we present more qualitative results (reconstruction on test examples, generated samples and resulting images due to interpolation in the latent space) for visual evaluation of the proposed generative framework, FlexAE.\par
Figure \ref{fig:recon} represents reconstruction of $6$ randomly chosen samples from test test split of MNIST (row 1 and 2), CIFAR-$10$ (row 3 and 4), and CELEBA (row 5 and 6) dataset. The odd rows represent true data and the even rows represents reconstructed data. Figure \ref{fig:mnist_gen}, \ref{fig:cifar10_gen}, \ref{fig:celeba_gen} presents $36$ randomly generated samples of MNIST, CIFAR-$10$ and CELEBA datasets respectively.\par
Next, we present more attribute based interpolation results from the CELEBA test split in Figure \ref{fig:test_big_nose}, Figure \ref{fig:test_heavy_makeup}, Figure \ref{fig:test_black_hair}, Figure \ref{fig:test_smiling}, and Figure \ref{fig:test_male} for the attributes ``Big Nose'', ``Heavy Makeup'', ``Black Hair'', ``Smiling'', and ``Male'' respectively. The central image of the grid in the sub-figures (a) and (b) in every figure presents a negative test example from the CELEBA dataset i.e. a test sample without the corresponding attribute. Whereas, the central image in the grid of the sub-figures (c) and (d) presents a positive test example i.e. a test sample with the particular attribute. For latent space traversal along a particular attribute direction, we calculate the average representation, $z_{pos}$ with respect to all the positive training samples and the average representation, $z_{neg}$ with respect to all the negative training samples. Finally, we use the direction $(z_{pos} - z_{neg})$ to traverse the latent space for attribute manipulation. Please note, this supervised traversal is performed post training in order to understand if the trained model could learn the meaning of different face attributes without supervision. The training was completely unsupervised without using any label information. As can be seen from the Figures \ref{fig:test_big_nose} - \ref{fig:test_male}, FlexAE could successfully learn the concept of different attributes without any kind of supervision. Otherwise, the interpolated figures would not be so smooth.\par
Finally, Figure \ref{fig:supp_knn_15x15} presents a $15\times15$ grid, where, the first column plots some randomly generated face images and the remaining entries in each row are the $14$ nearest neighbours (in terms of Euclidean distance) from the training split. The generated images are visually significantly different as compared to the nearest training examples. This confirms that FlexAE has not memorised the training examples and generates unique, unseen images.

\begin{figure*}[!h]
    \centering
    \begin{subfigure}[t]{0.24\textwidth}
        \centering
        \includegraphics[trim={4 4 4 4}, clip, keepaspectratio,width=\linewidth]{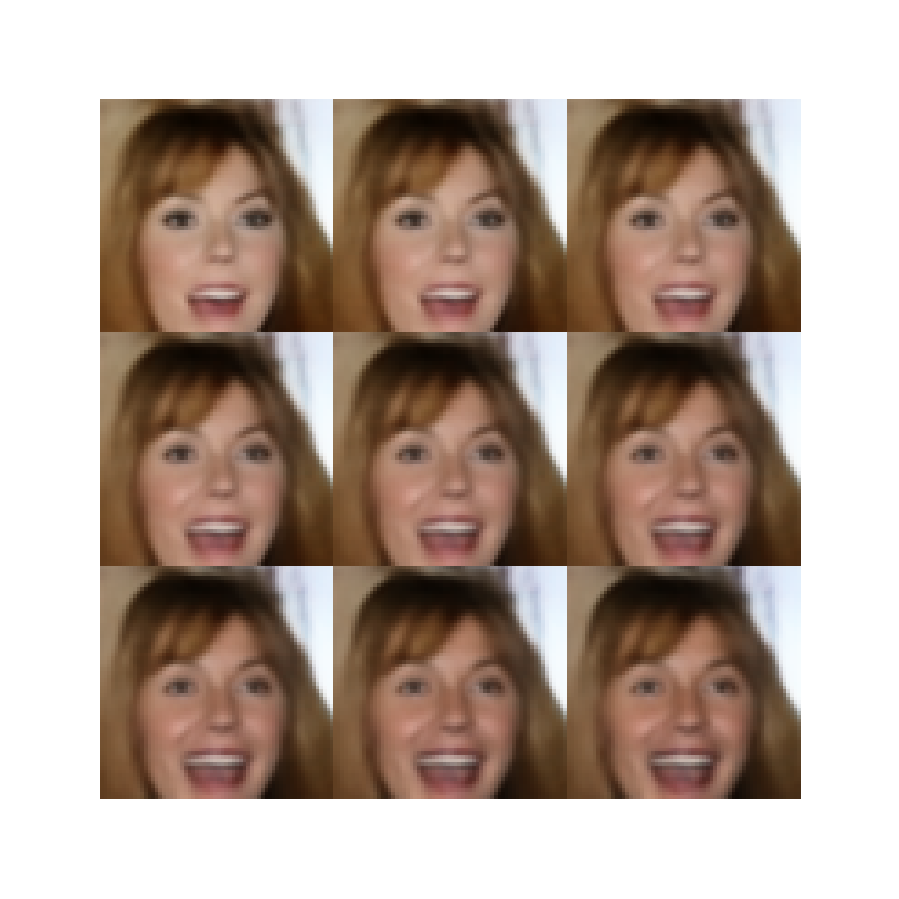}
        \caption{}
        \label{fig:big_nose_test_neg_1}
    \end{subfigure}%
    ~ 
    \begin{subfigure}[t]{0.24\textwidth}
        \centering
        \includegraphics[trim={4 4 4 4}, clip, keepaspectratio, width=\linewidth]{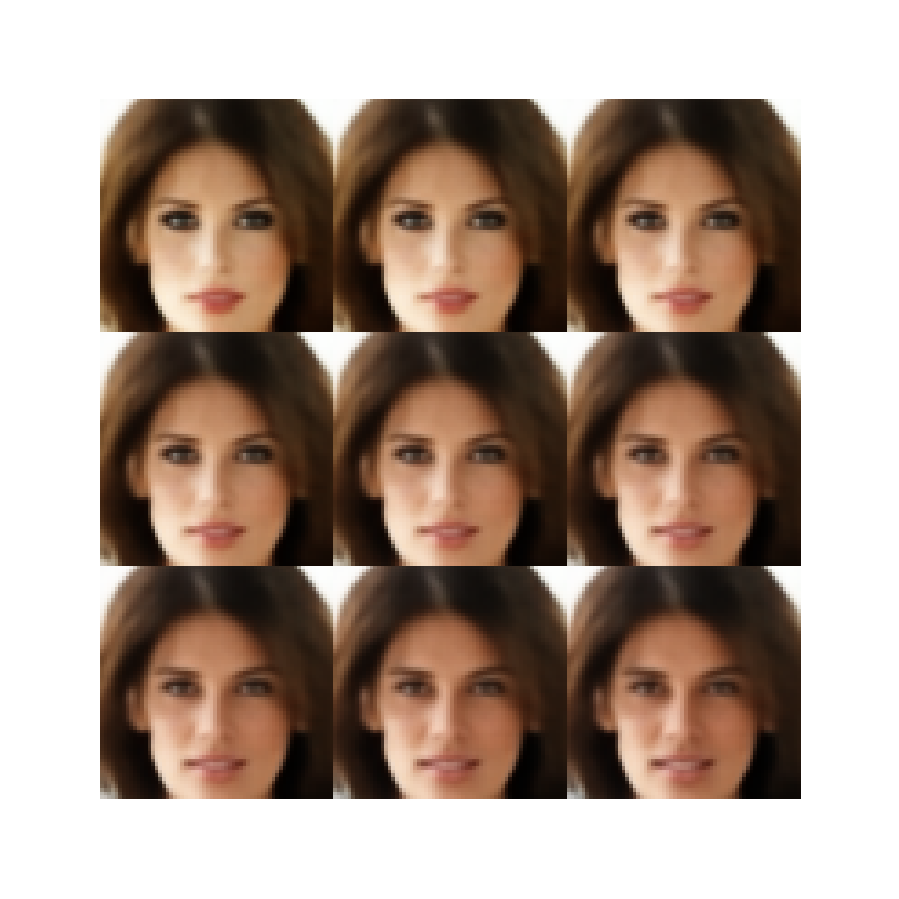}
        \caption{}
        \label{fig:big_nose_test_neg_2}
    \end{subfigure}%
    ~ 
    \begin{subfigure}[t]{0.24\textwidth}
        \centering
        \includegraphics[trim={4 4 4 4}, clip, keepaspectratio, width=\linewidth]{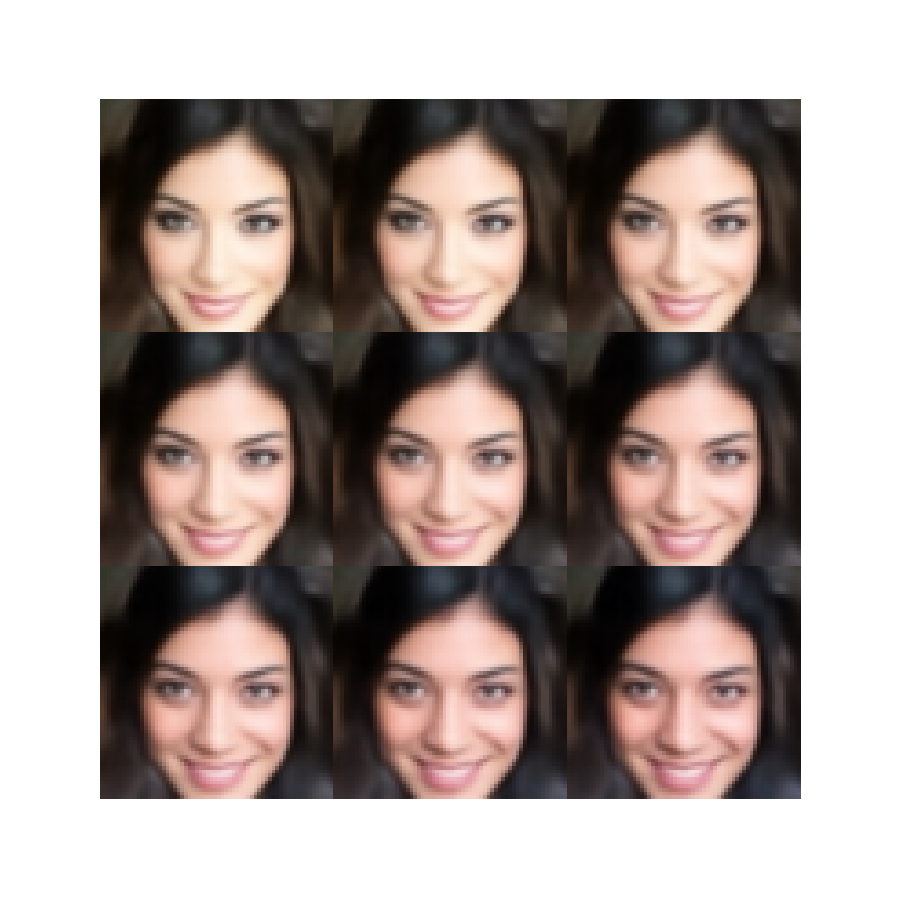}
        \caption{}
        \label{fig:big_nose_test_pos_1}
    \end{subfigure}%
    ~ 
    \begin{subfigure}[t]{0.24\textwidth}
        \centering
        \includegraphics[trim={4 4 4 4}, clip, keepaspectratio, width=\linewidth]{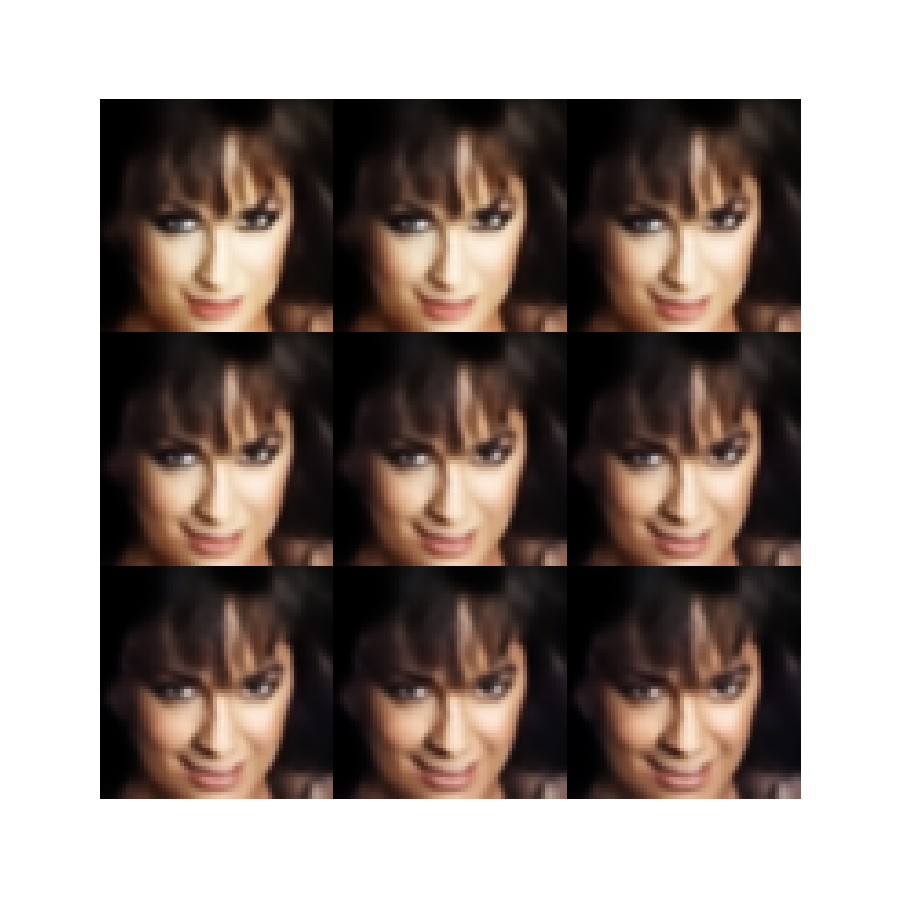}
        \caption{}
        \label{fig:big_nose_test_pos_2}
    \end{subfigure}
    \caption{Interpolations in the latent space of FlexAE on CelebA. Each row in (a) and (b) presents manipulation of the attribute ``Big Nose''. The central image of each grid in (a), and (b) is a true image from the test split without the attribute. Whereas, the central image of each grid in (c) and (d) is a true image from the test split with the attribute.}
    \label{fig:test_big_nose}
\end{figure*}

\begin{figure*}[!h]
    \centering
    \begin{subfigure}[t]{0.24\textwidth}
        \centering
        \includegraphics[trim={4 4 4 4}, clip, keepaspectratio,width=\linewidth]{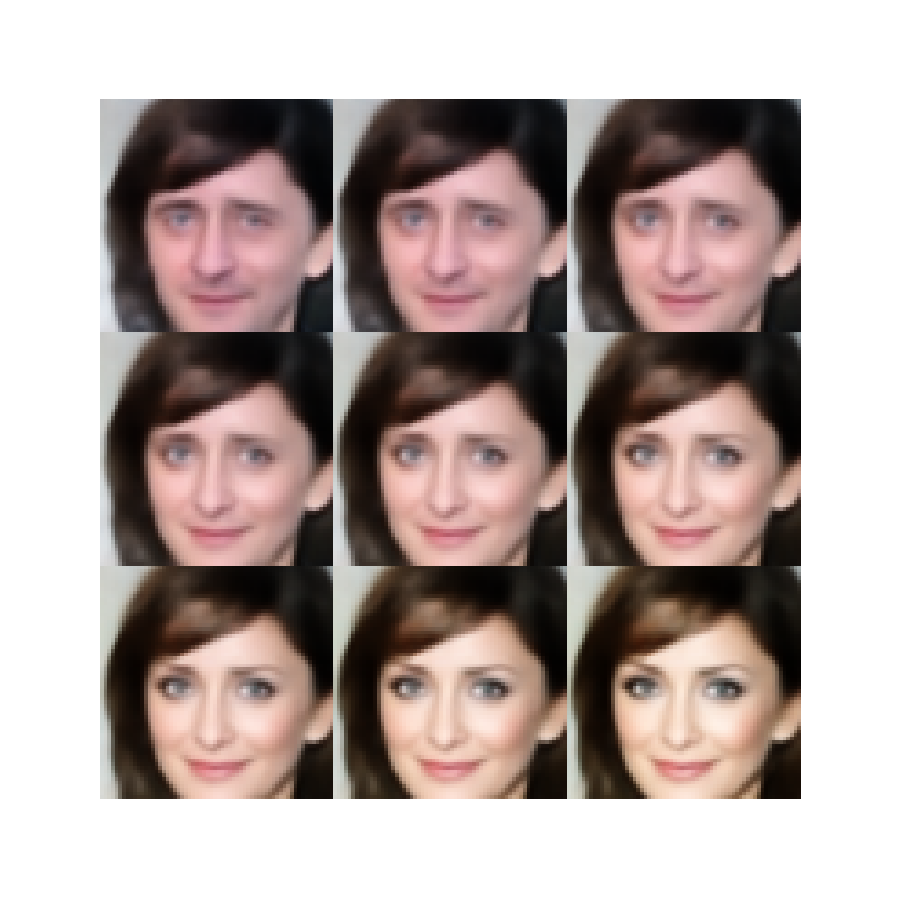}
        \caption{}
        \label{fig:heavy_makeup_test_neg_1}
    \end{subfigure}%
    ~ 
    \begin{subfigure}[t]{0.24\textwidth}
        \centering
        \includegraphics[trim={4 4 4 4}, clip, keepaspectratio, width=\linewidth]{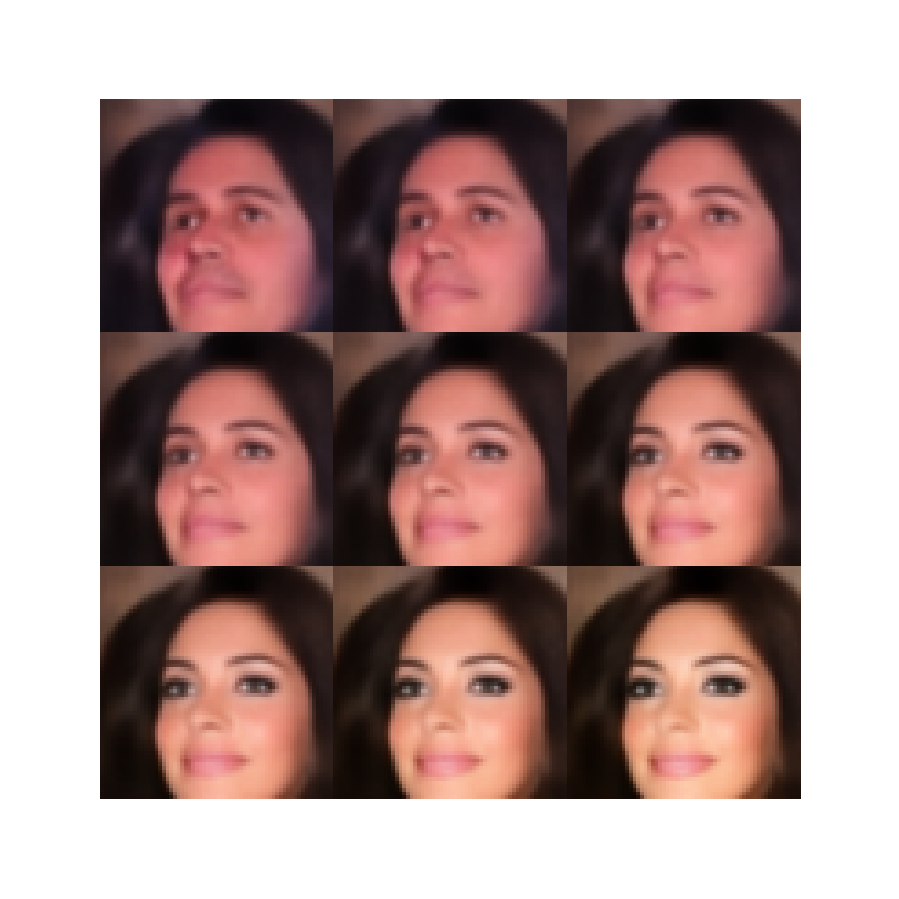}
        \caption{}
        \label{fig:heavy_makeup_test_neg_2}
    \end{subfigure}%
    ~ 
    \begin{subfigure}[t]{0.24\textwidth}
        \centering
        \includegraphics[trim={4 4 4 4}, clip, keepaspectratio, width=\linewidth]{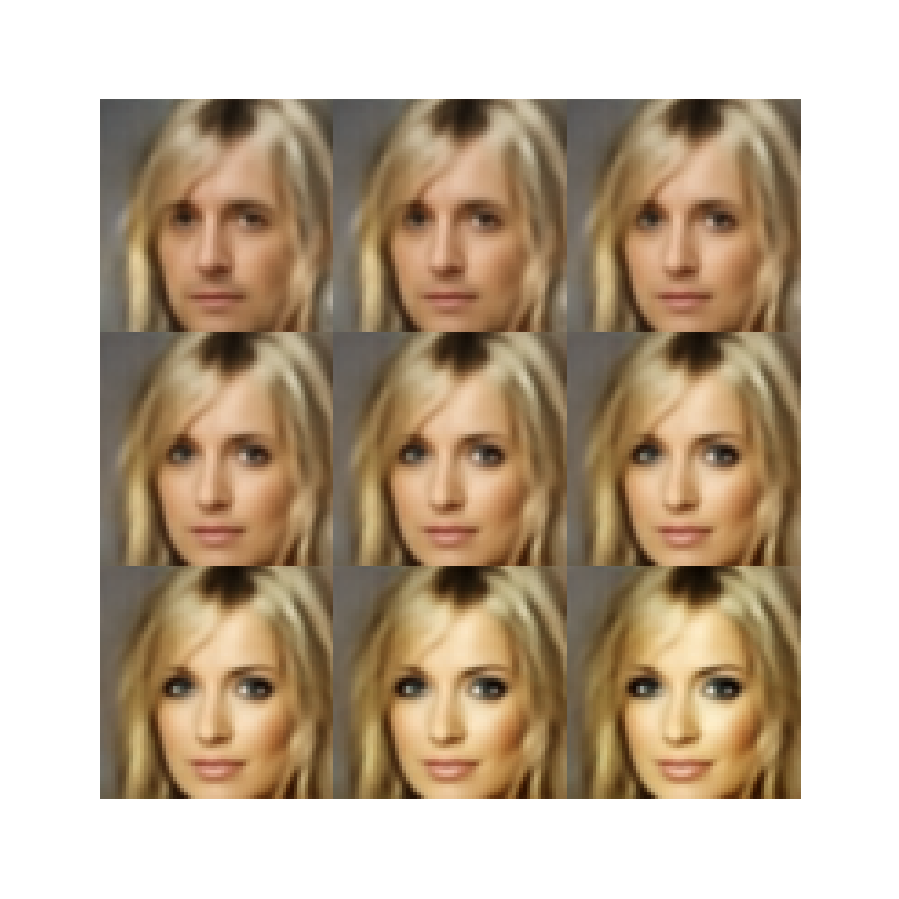}
        \caption{}
        \label{fig:heavy_makeup_test_pos_1}
    \end{subfigure}%
    ~ 
    \begin{subfigure}[t]{0.24\textwidth}
        \centering
        \includegraphics[trim={4 4 4 4}, clip, keepaspectratio, width=\linewidth]{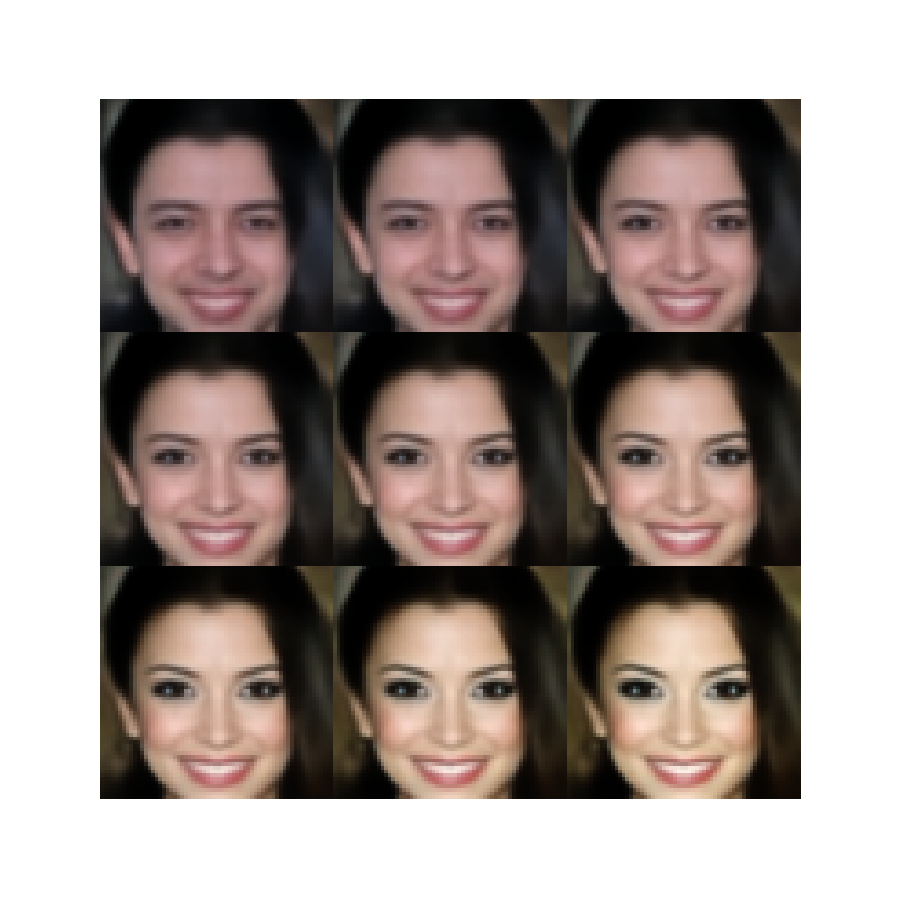}
        \caption{}
        \label{fig:heavy_makeup_test_pos_2}
    \end{subfigure}
    \caption{Interpolations in the latent space of FlexAE on CelebA. Each row in (a) and (b) presents manipulation of the attribute ``Heavy Makeup''. The central image of each grid in (a), and (b) is a true image from the test split without the attribute. Whereas, the central image of each grid in (c) and (d) is a true image from the test split with the attribute.}
    \label{fig:test_heavy_makeup}
\end{figure*}

\begin{figure*}[htbp]
    \centering
    \begin{subfigure}[t]{0.24\textwidth}
        \centering
        \includegraphics[trim={4 4 4 4}, clip, keepaspectratio,width=\linewidth]{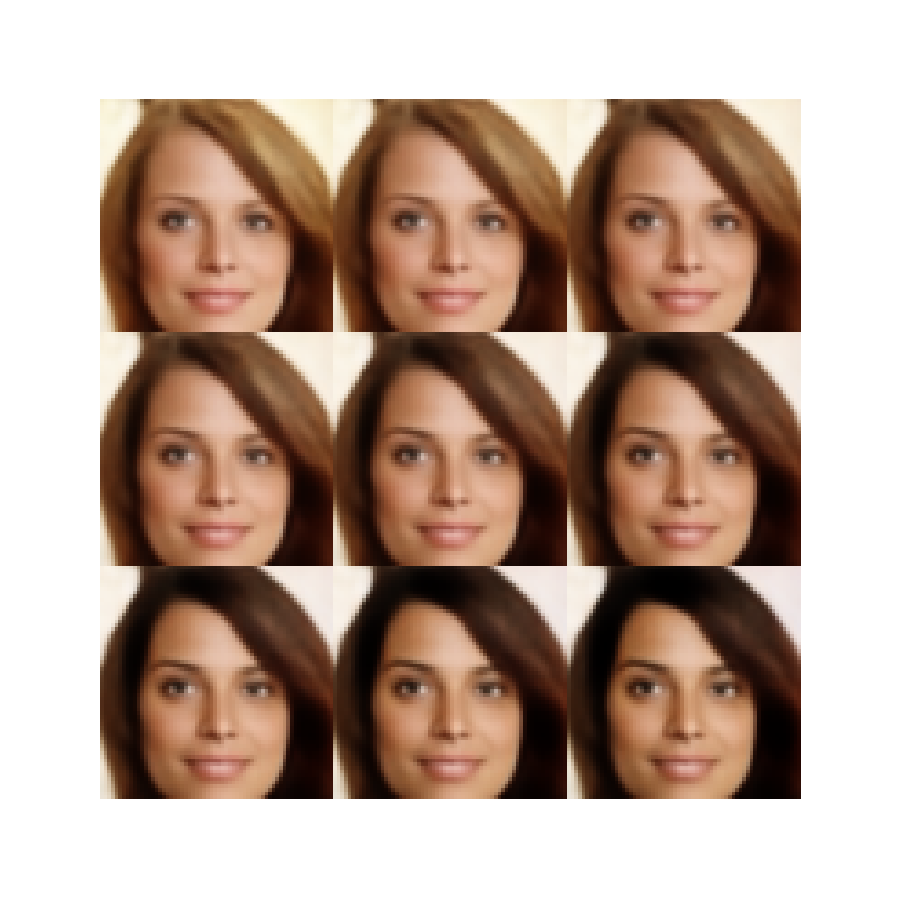}
        \caption{}
        \label{fig:black_hair_test_neg_1}
    \end{subfigure}%
    ~ 
    \begin{subfigure}[t]{0.24\textwidth}
        \centering
        \includegraphics[trim={4 4 4 4}, clip, keepaspectratio, width=\linewidth]{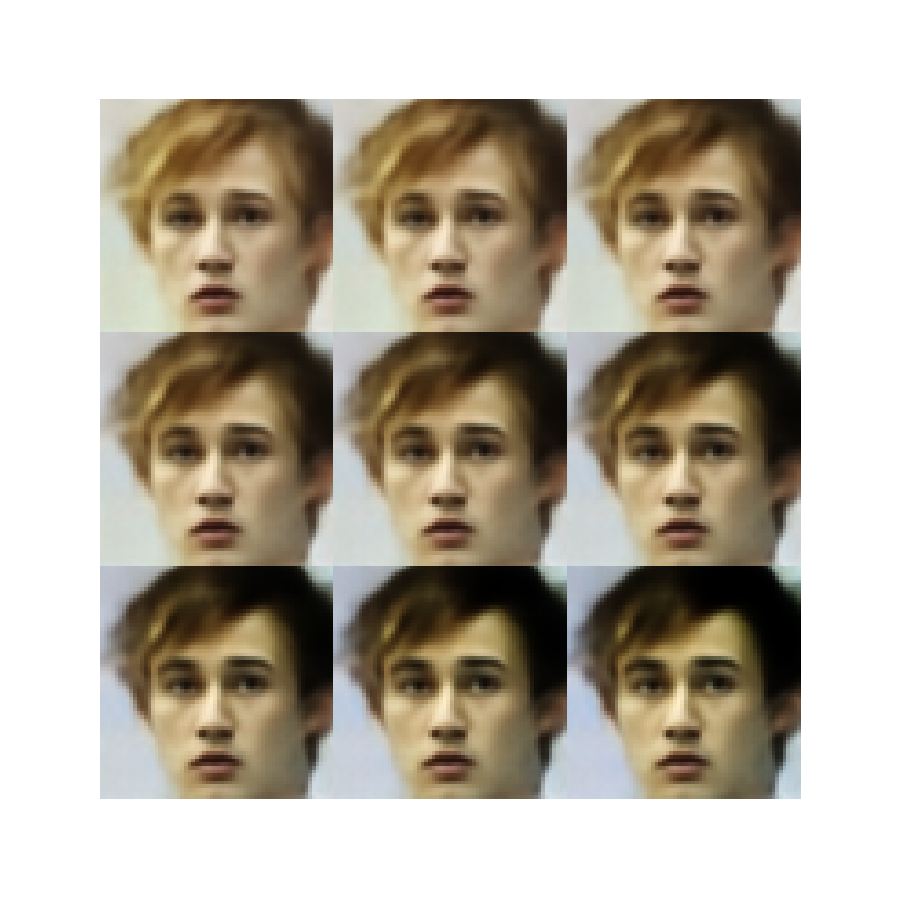}
        \caption{}
        \label{fig:black_hair_test_neg_2}
    \end{subfigure}%
    ~ 
    \begin{subfigure}[t]{0.24\textwidth}
        \centering
        \includegraphics[trim={4 4 4 4}, clip, keepaspectratio, width=\linewidth]{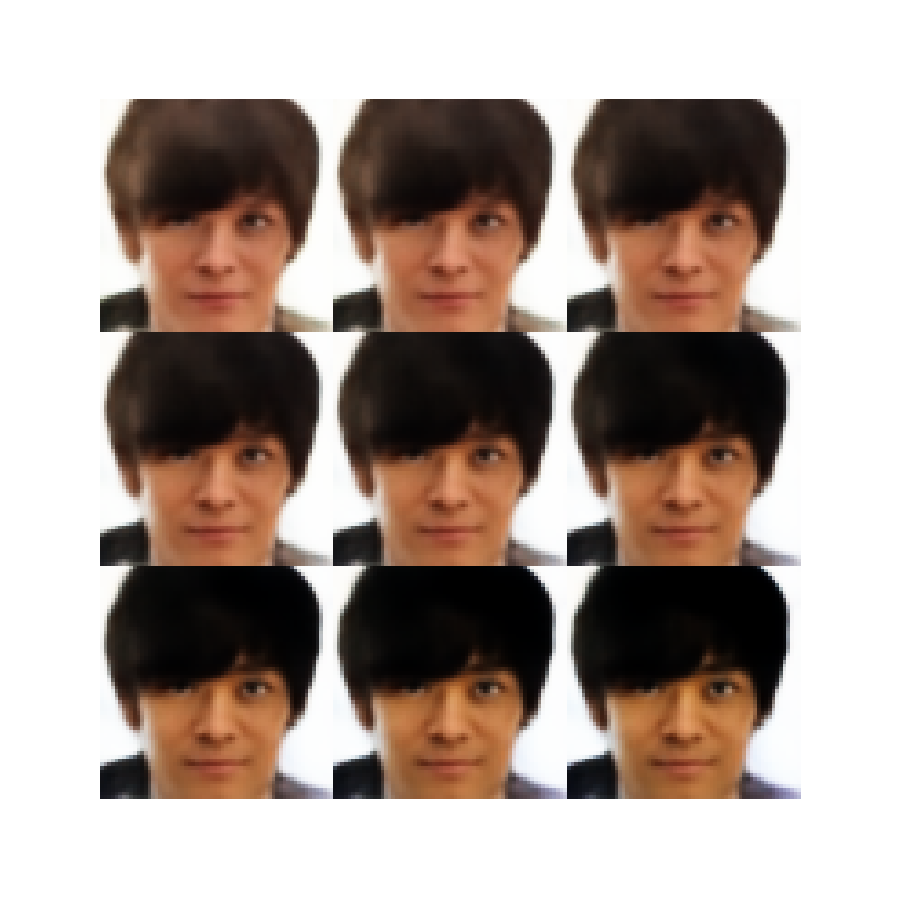}
        \caption{}
        \label{fig:black_hair_test_pos_1}
    \end{subfigure}%
    ~ 
    \begin{subfigure}[t]{0.24\textwidth}
        \centering
        \includegraphics[trim={4 4 4 4}, clip, keepaspectratio, width=\linewidth]{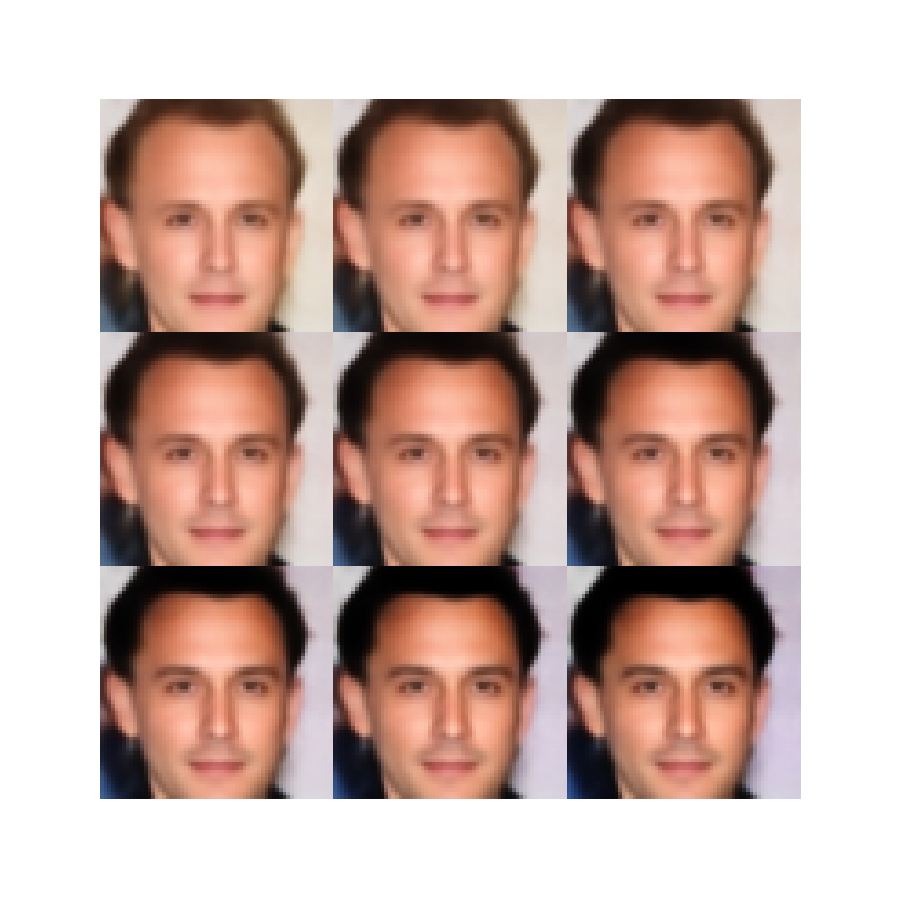}
        \caption{}
        \label{fig:black_hair_test_pos_2}
    \end{subfigure}
    \caption{Interpolations in the latent space of FlexAE on CelebA. Each row in (a) and (b) presents manipulation of the attribute ``Black Hair''. The central image of each grid in (a), and (b) is a true image from the test split without the attribute. Whereas, the central image of each grid in (c) and (d) is a true image from the test split with the attribute.}
    \label{fig:test_black_hair}
\end{figure*}

\begin{figure*}[htbp]
    \centering
    \begin{subfigure}[t]{0.24\textwidth}
        \centering
        \includegraphics[trim={4 4 4 4}, clip, keepaspectratio,width=\linewidth]{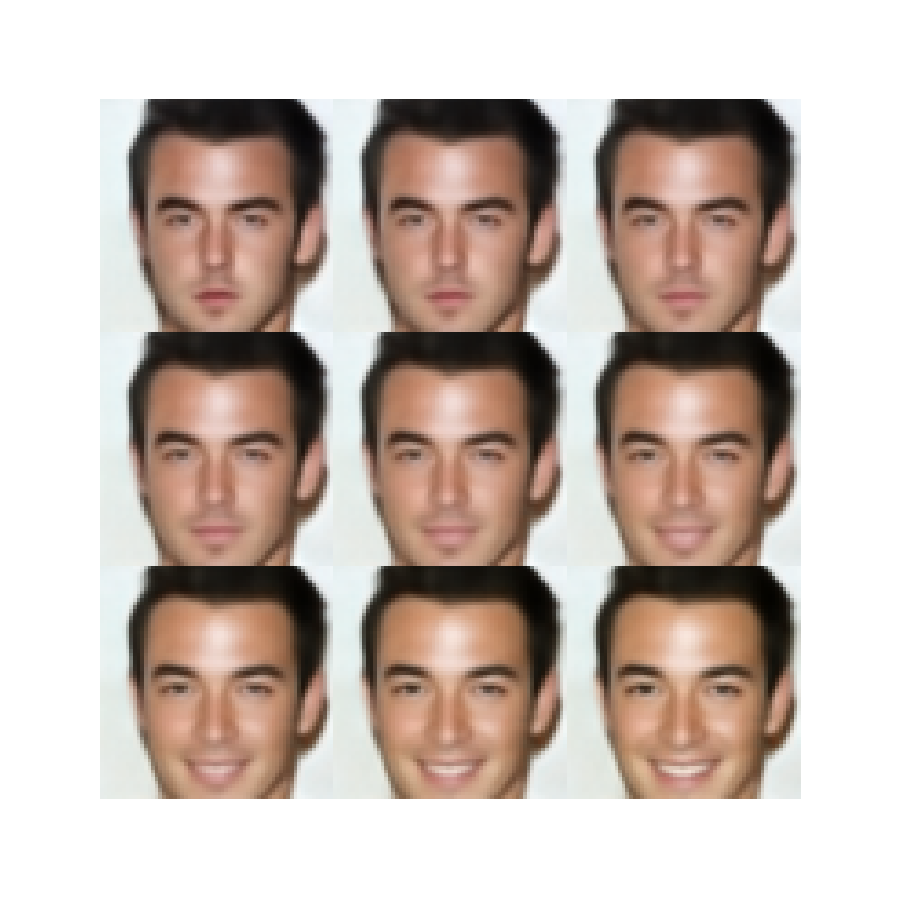}
        \caption{}
        \label{fig:smiling_test_neg_1}
    \end{subfigure}%
    ~ 
    \begin{subfigure}[t]{0.24\textwidth}
        \centering
        \includegraphics[trim={4 4 4 4}, clip, keepaspectratio, width=\linewidth]{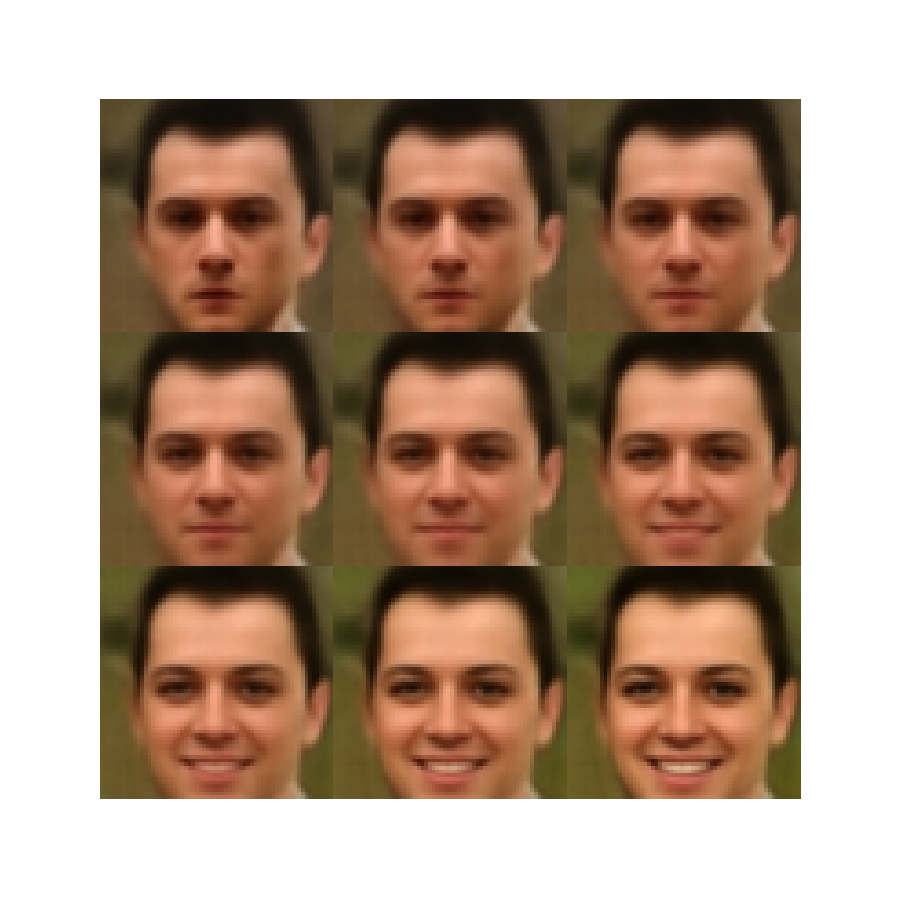}
        \caption{}
        \label{fig:smiling_test_neg_2}
    \end{subfigure}%
    ~ 
    \begin{subfigure}[t]{0.24\textwidth}
        \centering
        \includegraphics[trim={4 4 4 4}, clip, keepaspectratio, width=\linewidth]{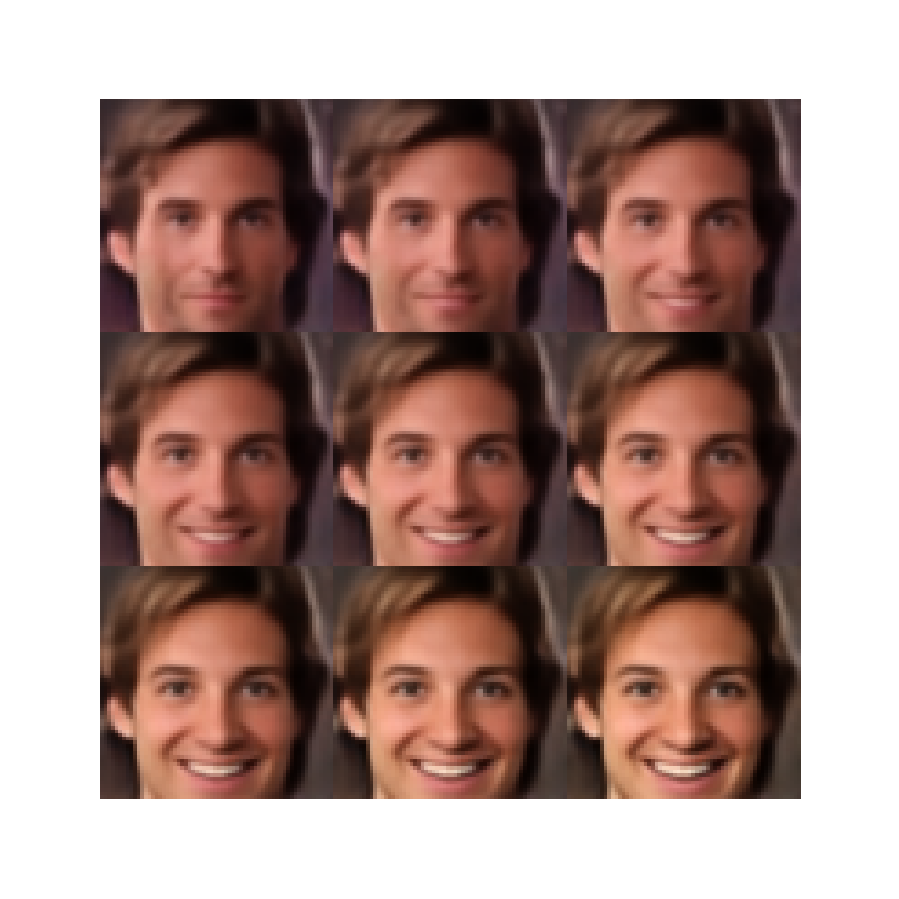}
        \caption{}
        \label{fig:smiling_test_pos_1}
    \end{subfigure}%
    ~ 
    \begin{subfigure}[t]{0.24\textwidth}
        \centering
        \includegraphics[trim={4 4 4 4}, clip, keepaspectratio, width=\linewidth]{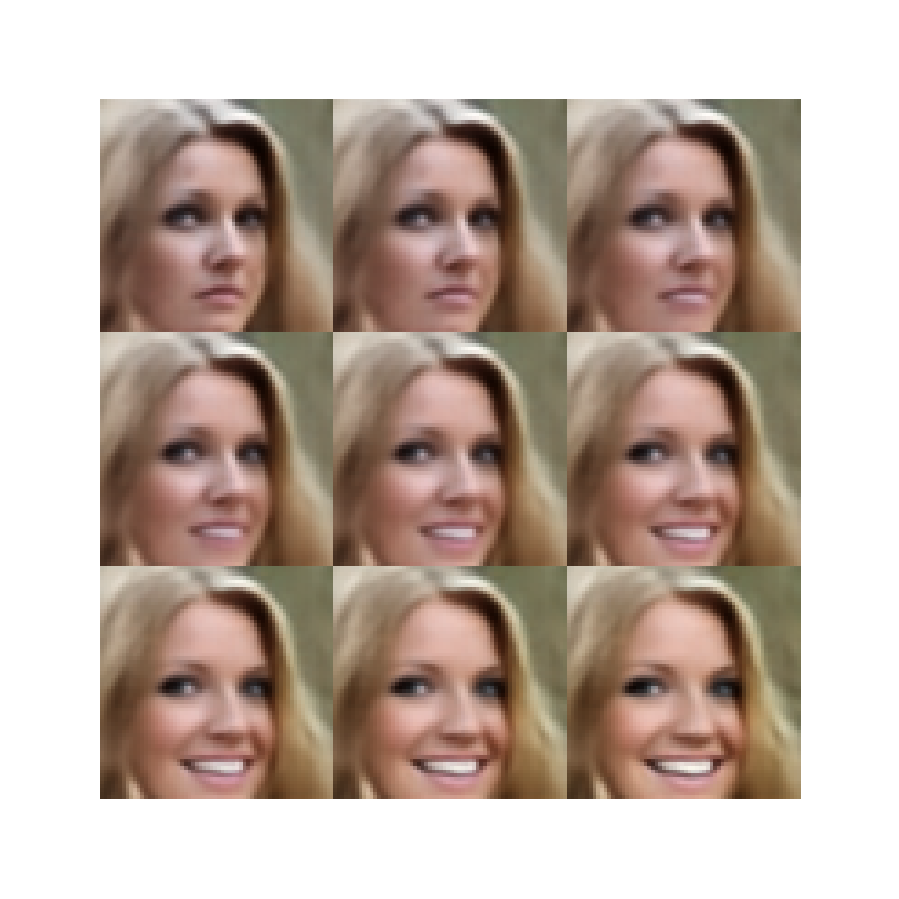}
        \caption{}
        \label{fig:smiling_test_pos_2}
    \end{subfigure}
    \caption{Interpolations in the latent space of FlexAE on CelebA. Each row in (a) and (b) presents manipulation of the attribute ``Smiling''. The central image of each grid in (a), and (b) is a true image from the test split without the attribute. Whereas, the central image of each grid in (c) and (d) is a true image from the test split with the attribute.}
    \label{fig:test_smiling}
\end{figure*}

\begin{figure*}[htbp]
    \centering
    \begin{subfigure}[t]{0.24\textwidth}
        \centering
        \includegraphics[trim={4 4 4 4}, clip, keepaspectratio,width=\linewidth]{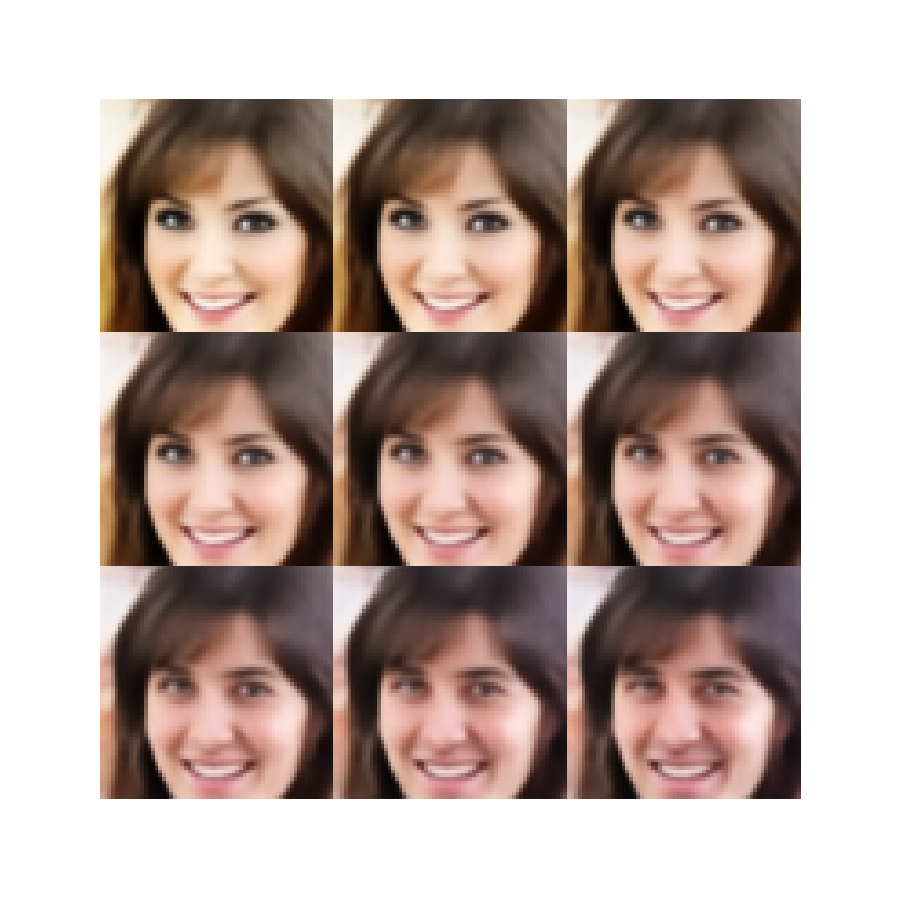}
        \caption{}
        \label{fig:male_test_neg_1}
    \end{subfigure}%
    ~ 
    \begin{subfigure}[t]{0.24\textwidth}
        \centering
        \includegraphics[trim={4 4 4 4}, clip, keepaspectratio, width=\linewidth]{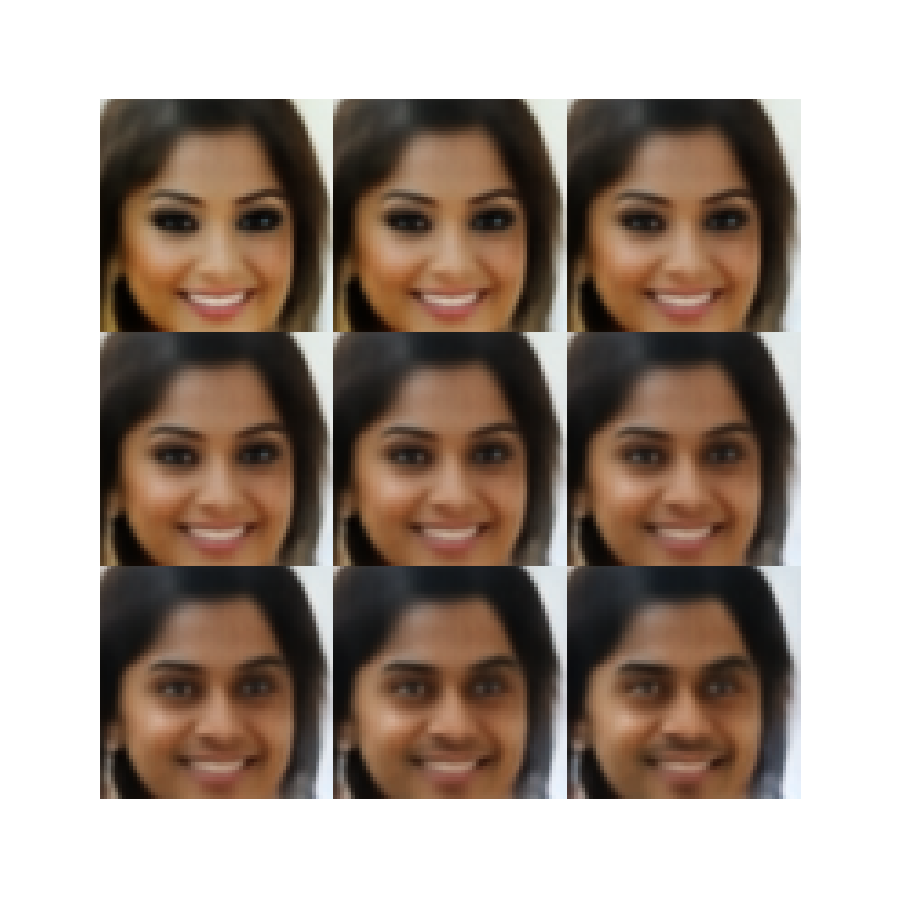}
        \caption{}
        \label{fig:male_test_neg_2}
    \end{subfigure}%
    ~ 
    \begin{subfigure}[t]{0.24\textwidth}
        \centering
        \includegraphics[trim={4 4 4 4}, clip, keepaspectratio, width=\linewidth]{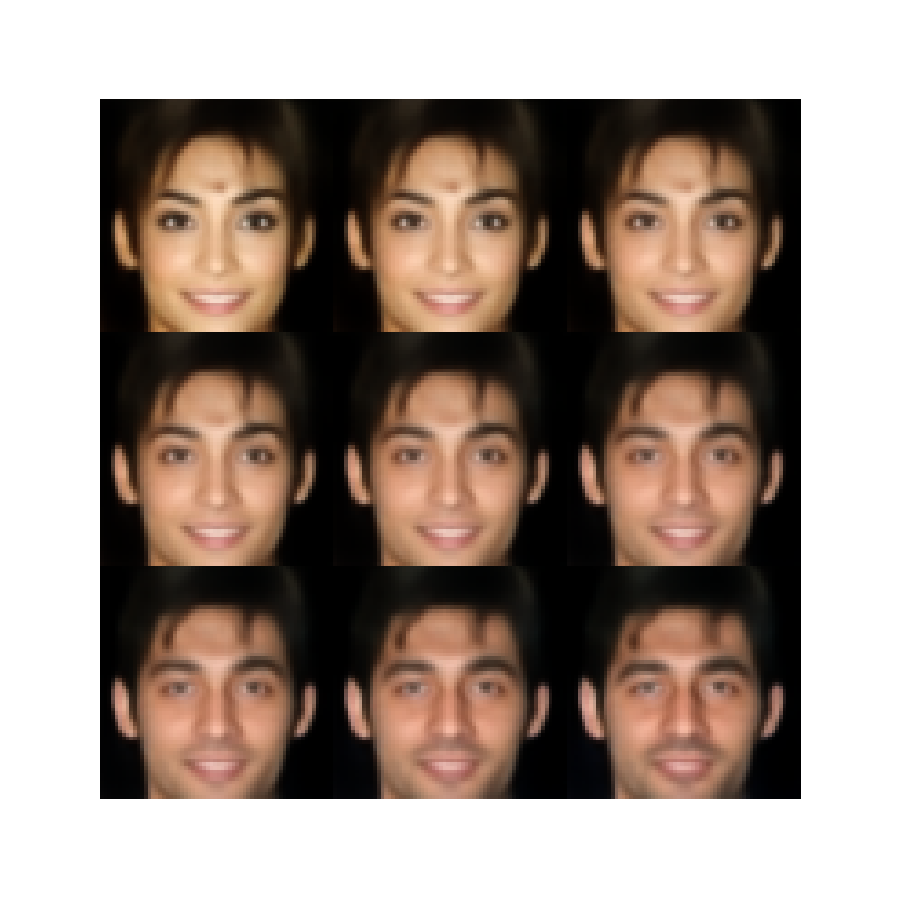}
        \caption{}
        \label{fig:male_test_pos_1}
    \end{subfigure}%
    ~ 
    \begin{subfigure}[t]{0.24\textwidth}
        \centering
        \includegraphics[trim={4 4 4 4}, clip, keepaspectratio, width=\linewidth]{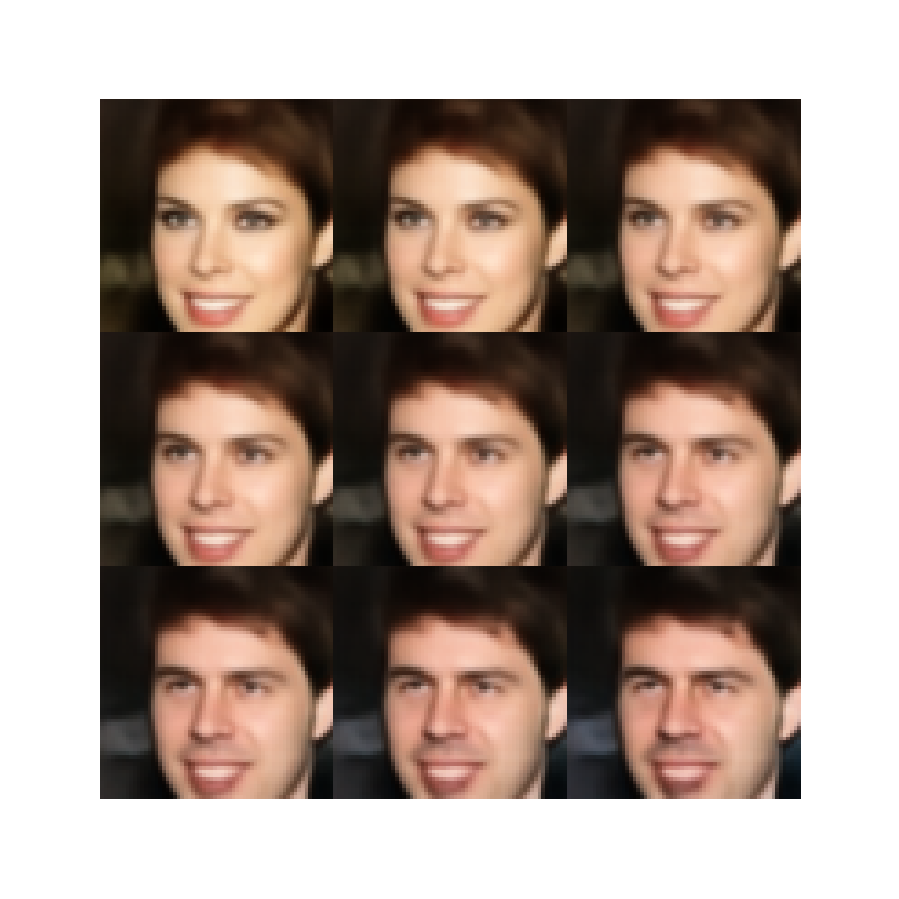}
        \caption{}
        \label{fig:male_test_pos_2}
    \end{subfigure}
    \caption{Interpolations in the latent space of FlexAE on CelebA. Each row in (a) and (b) presents manipulation of the attribute ``Male''. The central image of each grid in (a), and (b) is a true image from the test split without the attribute. Whereas, the central image of each grid in (c) and (d) is a true image from the test split with the attribute.}
    \label{fig:test_male}
\end{figure*}

\begin{figure*}[htbp]
    \centering
    \includegraphics[trim={5 5 5 5}, clip, keepaspectratio, width=\textwidth]{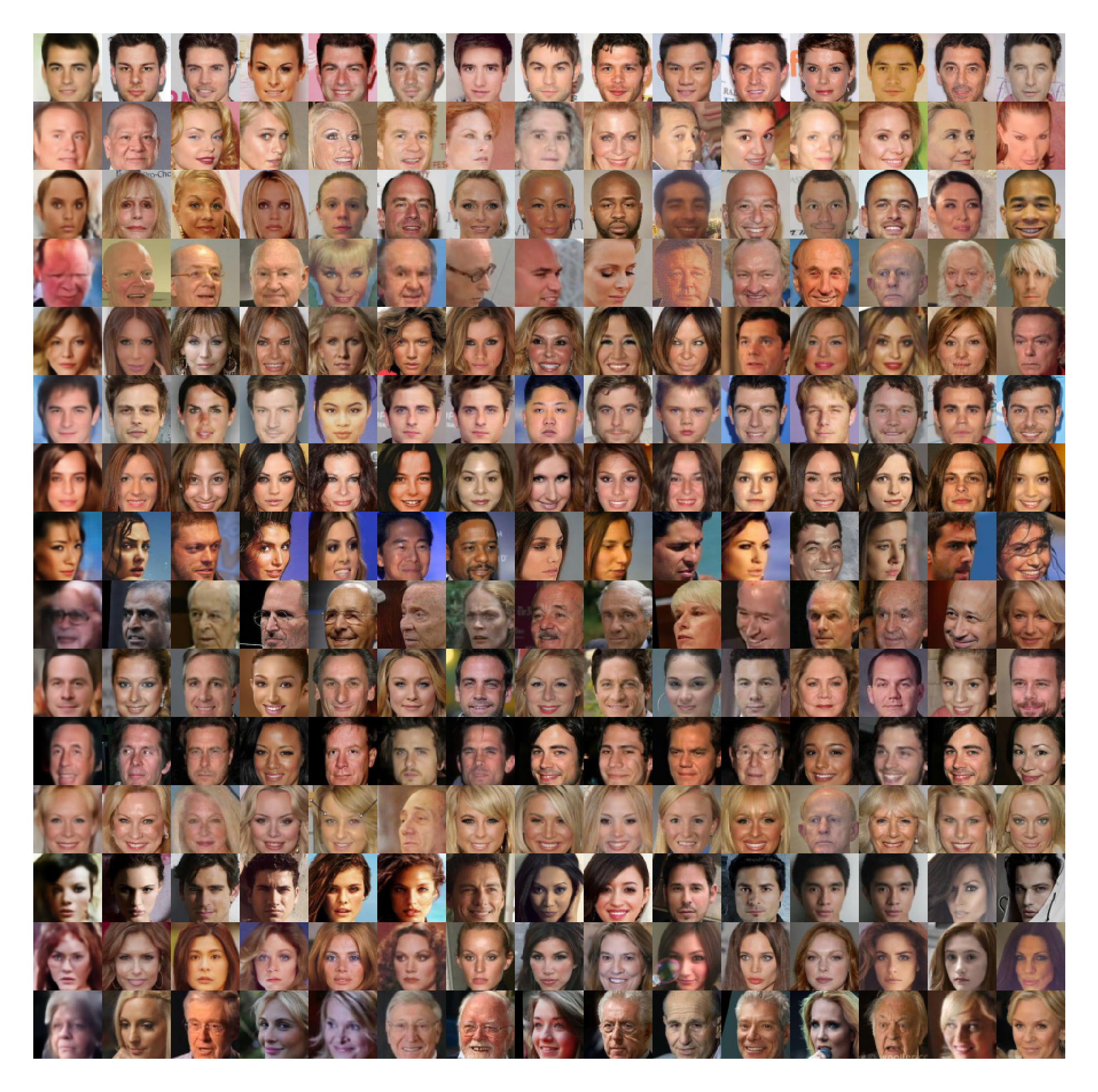}
    \caption{The first entry in each row represents a randomly generated face using FlexAE. The remaining entries in each row represents $14$ nearest neighbours (in terms of Euclidean distance) from the train split of CELEBA dataset. It is seen that the generated images using FlexAE are very different as compared to the training examples. This confirms that the state of the art FID score and precision recall score obtained using FlexAE is not due to mere overfitting on the training split.}
    \label{fig:supp_knn_15x15}
\end{figure*}

\clearpage
\section{\color{white}{References}}
\vspace{-93mm}
\small{
\setlength{\bibsep}{5pt}
\bibliography{bibliography}}

\end{document}